\theoremstyle{plain}
\newtheorem*{lemma}{Lemma}
\newcommand{\bw}{\mathbf{w}}
\newcommand{\bW}{\mathbf{W}}
\newcommand{\bx}{\mathbf{x}}
\newcommand{\by}{\mathbf{y}}
\newcommand{\bb}{\mathbf{b}}
\newcommand{\ba}{\mathbf{a}}
\newcommand{\bhh}{\mathbf{h}}
\newcommand{\bU}{\mathbf{U}}
\newcommand{\bV}{\mathbf{V}}
\newcommand{\bA}{\mathbf{A}}
\newcommand{\bmu}{\boldsymbol{\mu}}
\newcommand{\bSigma}{\boldsymbol{\Sigma}}
\newcommand{\btheta}{\boldsymbol{\theta}}
\newcommand{\mvec}{\mathrm{vec}}
\newcommand{\mdiag}{\mathrm{diag}}
\newcommand{\mMN}{\mathcal{MN}}
\newcommand{\spm}[1]{{\tiny \textcolor{gray}{$\pm$#1}}}
\newcommand\ktiedN{\mathop{\mbox{$k$-$\mathit{tied}$-$\mathcal{N}$}}}
\newcommand*\widefbox[1]{\fbox{\hspace{1em}#1\hspace{1em}}}
\def\pb{{\mathbf p}}
\def\bb{{\mathbf b}}
\def\qb{{\mathbf q}}
\def\zerob{{\mathbf 0}}
\def\Ab{{\mathbf A}}
\def\Bb{{\mathbf B}}
\def\Qb{{\mathbf Q}}
\def\Db{{\mathbf D}}
\def\Mb{{\mathbf M}}
\def\Pb{{\mathbf P}}
\def\Real{{\mathbb{R}}}
\def\Ncal{\mathcal{N}}
\icmltitlerunning{The $k$-tied Normal Mean Field Posterior}
\begin{document}

\twocolumn[
\icmltitle{The $k$-tied Normal Distribution: A Compact Parameterization of~\\ Gaussian Mean Field Posteriors in Bayesian Neural Networks}

\icmlsetsymbol{atgoogle}{+}

\begin{icmlauthorlist}
\icmlauthor{Jakub Swiatkowski}{warsaw,atgoogle}
\icmlauthor{Kevin Roth}{ethz,atgoogle}
\icmlauthor{Bastiaan S. Veeling}{uva,google,atgoogle}
\icmlauthor{Linh Tran}{imperial,atgoogle}
\icmlauthor{Joshua V. Dillon}{google}
\icmlauthor{Jasper Snoek}{google}
\icmlauthor{Stephan Mandt}{uci,atgoogle}
\icmlauthor{Tim Salimans}{google}
\icmlauthor{Rodolphe Jenatton}{google}
\icmlauthor{Sebastian Nowozin}{msr,atgoogle}
\end{icmlauthorlist}

\icmlaffiliation{ethz}{ETH Zurich}
\icmlaffiliation{uva}{University of Amsterdam}
\icmlaffiliation{warsaw}{University of Warsaw}
\icmlaffiliation{imperial}{Imperial College London}
\icmlaffiliation{uci}{University of California, Irvine}
\icmlaffiliation{msr}{Microsoft Research}
\icmlaffiliation{google}{Google Research}

\icmlcorrespondingauthor{Jakub Swiatkowski}{jakub.swiatkowski@mimuw.edu.pl}

\icmlkeywords{Bayesian Deep Learning}

\vskip 0.3in
]

\newcommand{\workAtGoogle}{\textsuperscript{+}Work done while at Google }
\printAffiliationsAndNotice{\workAtGoogle}

\begin{abstract}
Variational Bayesian Inference is a popular methodology for approximating posterior distributions over Bayesian neural network weights. Recent work developing this class of methods has explored ever richer parameterizations of the approximate posterior in the hope of improving performance. In contrast, here we share a curious experimental finding that suggests instead restricting the variational distribution to a more compact parameterization. For a variety of deep Bayesian neural networks trained using Gaussian mean-field variational inference, we find that the posterior standard deviations consistently exhibit strong low-rank structure after convergence. This means that by decomposing these variational parameters into a low-rank factorization, we can make our variational approximation more compact without decreasing the models' performance. Furthermore, we find that such factorized parameterizations improve the signal-to-noise ratio of stochastic gradient estimates of the variational lower bound, resulting in faster convergence.
\end{abstract}

\section{Introduction}

Bayesian neural networks \citep{mackay1992practical, neal1993bayesian} are a popular class of deep learning models. The most widespread approach for training these models relies on variational inference~\citep{peterson1987mean,hinton1993keeping}, a training paradigm that approximates a Bayesian posterior 
with a simpler class of distributions by solving an optimization problem.  The common wisdom is that more expressive distributions lead to better posterior approximations and ultimately to better model performance. This paper puts this into question and instead finds that for Bayesian neural networks, more restrictive classes of distributions, based on low-rank factorizations, can outperform the common mean-field family.

\textbf{Bayesian Neural Networks} explicitly represent their parameter-uncertainty by forming a \emph{posterior distribution} over model parameters, instead of relying on a single point estimate for making predictions, as is done in traditional deep learning. For neural network weights $\bw$, features $\bx$ and labels $\by$, the posterior distribution $p(\bw|\bx,\by)$ is computed using Bayes' rule, which multiplies the prior distribution $p(\bw)$ and data likelihood $p(\by|\bw,\bx)$ and renormalizes. When predicting with Bayesian neural networks, we form an average over model predictions where each prediction is generated using a set of parameters that is randomly sampled from the posterior distribution. This can be viewed as a type of \emph{ensembling}, of which various types have proven highly effective in deep learning \citep[see e.g.][sec 7.11]{goodfellow2016deep}.

Besides offering improved predictive performance over single models, Bayesian ensembles are also more robust because ensemble members will tend to make different predictions on hard examples \citep{raftery2005using}. In addition, the diversity of the ensemble represents predictive uncertainty and can be used for out-of-domain detection or other risk-sensitive applications \citep{ovadia2019can}.

\textbf{Variational inference} is a popular class of methods for approximating the posterior distribution $p(\bw|\bx,\by)$, since the exact Bayes' rule is often intractable to compute for models of practical interest. This class of methods specifies a distribution $q_{\btheta}(\bw)$ of given parametric or functional form as the posterior approximation, and 
optimizes the approximation by solving an optimization problem. In particular, we minimize the Kullback-Leibler (KL) divergence $D_{KL}$ between the variational distribution $q_{\btheta}(\bw)$ and the true posterior distribution $p(\bw|\bx,\by)$, which is given by
\begin{equation} 
\begin{split} 
D_{\textrm{KL}}[q_{\btheta}(\bw) || p(\mathbf{w}|\bx,\by)]= 
  \mathbb{E}_q\left [\log \frac{q_{\btheta}(\bw)}{p(\mathbf{w}|\bx,\by)}\right ] \\= 
  \mathbb{E}_q\left [\log \frac{q_{\btheta}(\bw)}{p(\mathbf{w})p(\by|\mathbf{w},\bx) / p(\by|\bx)}\right ].
 \end{split}
 \end{equation}
Here, we do not know the normalizing constant of the exact posterior $p(\by|\bx)$, but since this term does not depend on $\mathbf{w}$, we may ignore it for the purpose of optimizing our approximation $q$. We are then left with what is called the negative Evidence Lower Bound (negative ELBO):
\begin{equation}\label{eq:elbo}
L_{q} = D_{\textrm{KL}}[q_{\btheta}(\bw) || p(\mathbf{w})] - \mathbb{E}_q[\log p(\by|\mathbf{w},\bx)].
\end{equation}
In practice, the expectation of the log-likelihood $p(\by|\mathbf{w},\bx)$ with respect to $q$ is usually not analytically tractable and instead is estimated using Monte Carlo sampling:
\begin{equation}
\begin{split} 
\mathbb{E}_{q} [\log p(\by | \mathbf{w},\bx)] 
  \approx \frac{1}{S}\sum_{s=1}^S \log p(\by | \mathbf{w}^{(s)},\bx), \\
  \quad \mathbf{w}^{(s)} \sim q_{\btheta}(\mathbf{w}),
\end{split}
\end{equation}
where the ELBO is optimized by differentiating this stochastic approximation with respect to the variational parameters $\btheta$ \citep{salimans2013fixed, kingma2013auto}.

\begin{figure}[!t]
\center{\includegraphics[width=\columnwidth]{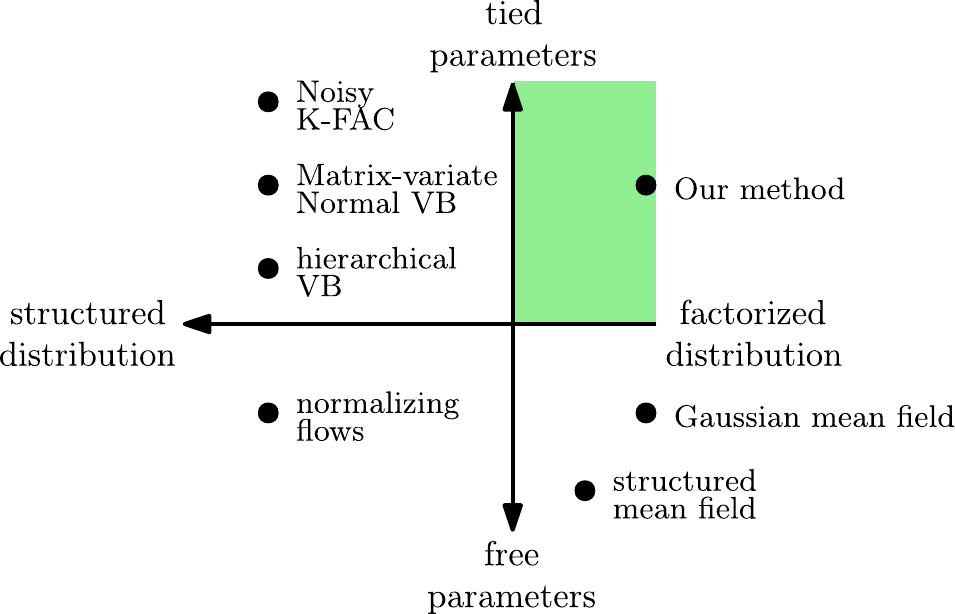}}%
\vspace{-0.2cm}%
\caption{Approximate summarization of different variational inference methods for Bayesian deep learning. Our approach complements existing approaches by combining the mean-field assumption with a dramatic reduction in the number of parameters by weight sharing.}
\vspace{-0.25cm}%
\label{fig:page1}
\end{figure}
In \textbf{Gaussian Mean Field Variational Inference} (GMFVI) ~\citep{blei2017variational,blundell2015weightuncertainty}, we choose the variational approximation to be a fully factorized Gaussian distribution $q = \mathcal{N}(\bmu_q, \bSigma_q)$ with $w_{lij} \sim \mathcal{N}(\mu_{lij}, \sigma^2_{lij})$, where $l$ is a layer number, and $i$ and $j$ are the row and column indices in the layer's weight matrix. 
 While Gaussian Mean-Field posteriors
are considered to be one of the simplest types of variational approximations, with some known
limitations~\citep{giordano2018covariances}, they scale to comparatively large models and generally provide
competitive performance~\citep{ovadia2019can}. Additionally, \citet{farquhar_try_2019} have found that the Mean-Field becomes a less restrictive assumption as the depth of the network increases. However, when compared to deterministic neural
networks, GMFVI doubles the number of parameters and is often harder to train due to the increased
noise in stochastic gradient estimates. Furthermore, despite the theoretical advantages of GMFVI over the deterministic neural networks, GMFVI suffers from over-regularization for larger networks, which leads to underfitting and often worse predictive performance in such settings~\citep{wenzel2020good}. %

Beyond mean-field variational inference, recent work on approximate Bayesian inference has explored ever richer parameterizations of the approximate posterior in the hope of improving the performance of Bayesian neural networks (see Figure~\ref{fig:page1}). 
In contrast, here we study a simpler, more compactly parameterized variational approximation. Our motivation for studying this setting is to better understand the behaviour of GMFVI with the goal to address the issues with its practical applicability. Consequently, we show that the compact approximations can also work well for a variety of models. 
In particular we find that:
\begin{compactitem}
\item Converged posterior standard deviations under GMFVI consistently display strong low-rank structure. This means that by decomposing these variational parameters into a low-rank factorization, we can make our variational approximation more compact without decreasing our model's performance.
\item Factorized parameterizations of posterior standard deviations improve the signal-to-noise ratio of stochastic gradient estimates, and thus not only reduce the number of parameters compared to standard GMFVI, but also can lead to faster convergence.
\end{compactitem}

\section{Mean Field Posterior Standard Deviations Naturally Have Low-Rank Structure}

In this section we show that the converged posterior standard deviations of Bayesian neural networks trained using standard GMFVI consistently display strong low-rank structure. We also show that it is possible to compress the learned posterior standard deviation matrix using a low-rank approximation without decreasing the network's performance. We first briefly introduce the mathematical notation for our GMFVI setting and the low-rank approximation that we explore. We then provide experimental results that support the two main claims of this section.

To avoid any confusion among the readers, we would like to clarify that we use the terminology ``low-rank'' in a particular context. While variational inference typically makes use of low-rank decompositions to compactly represent the \textit{dense covariance} of a Gaussian variational distribution (see numerous references in Section~\ref{sec:related_work}), we investigate instead underlying low-rank structures within \textit{the already diagonal covariance} of a Gaussian fully-factorized variational distribution. 
Figure~\ref{fig:ktied_normal_vs_mfvi} aims to make this even more clear by illustrating the relationship between the Gaussian fully-factorized variational distribution and its ``low-rank'' parameterization explored in this  paper. 
We will make this explanation more formal in the next section.

\begin{figure*}[!t]
\centering%
\includegraphics[width=\textwidth]{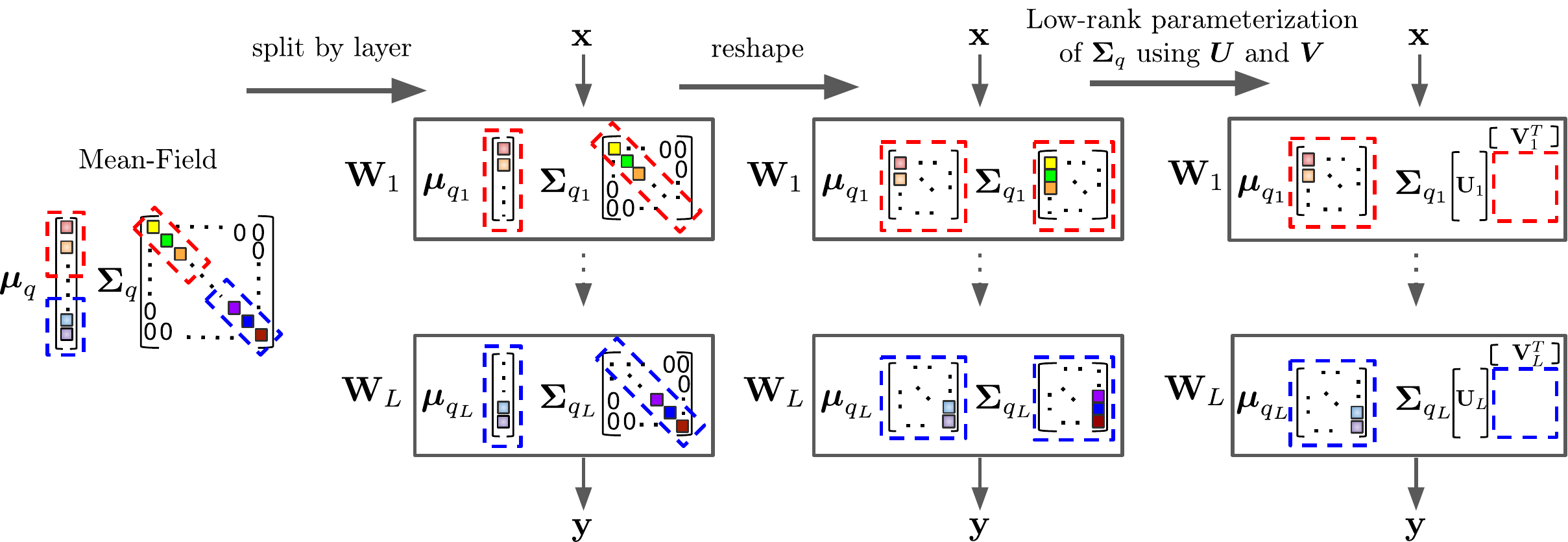}%
\caption{Illustration of the relationship between the standard Gaussian Mean-Field posterior and its ``low-rank'' parameterization, which we call the $k$-tied Normal posterior. The illustration shows the posterior parameterization for a network with $L$ layers, where $\bx$ and $\by$ are the network inputs and outputs respectively, and $\boldsymbol{\mu}_{q_1}$, $\boldsymbol{\Sigma}_{q_1}$, $\boldsymbol{\mu}_{q_L}$ and $\boldsymbol{\Sigma}_{q_L}$ are the variational parameters for the layers 1 and $L$ respectively. The $k$-tied Normal distribution parameterizes the already diagonal per layer posterior covariance matrices $\boldsymbol{\Sigma}_{q_{1..L}}$ using the even more compact $\boldsymbol{U}_{1..L}$ and $\boldsymbol{V}_{1..L}$ matrices from  $\mathcal{N}\big(\bmu_q, \mdiag\big(\mvec\big( (\mathbf{U} \mathbf{V}^T)^2 \big)\big)\big)$.}
\label{fig:ktied_normal_vs_mfvi}%
\end{figure*}

\subsection{Methodology}\label{sec:methodology}
To introduce the notation, we consider layers that consist of a linear transformation followed by a non-linearity $f$,
\begin{equation}
\mathbf{a}_l = \mathbf{h}_{l} \mathbf{W}_l + \mathbf{b}_l, \qquad \mathbf{h}_{l+1} = f(\mathbf{a}_l),
\end{equation}
where $\bW_l \in \mathbb{R}^{m \times n}$, $\bhh_{l} \in \mathbb{R}^{1 \times m}$ and $\bb_l, \ba_l, \bhh_{l+1} \in \mathbb{R}^{1 \times n}$. To simplify the notation in the following, we drop the subscript $l$ such that $\bW = \bW_l$, $\bmu_q = \bmu_{ql}$, $\bSigma_q = \bSigma_{ql}$ and we focus on the kernel matrix $\bW$ for a single layer. 

In GMFVI, we model the variational posterior as %
\begin{equation}
\begin{split}
q(\bW) = \mathcal{N}(\bmu_{q}, \bSigma_{q}) = \prod_{i=1}^m \prod_{j=1}^n q(w_{ij}), \quad \\
\mathrm{with} \quad q(w_{ij}) = \mathcal{N}(\mu_{ij}, \sigma^2_{ij}),
\end{split}
\end{equation}
where $\bmu_q \in \mathbb{R}^{mn \times 1}$ is the posterior mean vector, $\bSigma_{q} \in \mathbb{R}_{+}^{mn \times mn}$ is the diagonal posterior covariance matrix.
The weights are then usually sampled using a reparameterization trick \citep{kingma2013auto}, i.e, for the $s$-th sample, we have
\begin{equation}
w_{ij}^{(s)} = \mu_{ij} + \sigma_{ij} \epsilon^{(s)}, \qquad \epsilon \sim \mathcal{N}(0, 1).
\end{equation}
In practice, we often represent the posterior standard deviation parameters $\sigma_{ij}$ in the form of a matrix $\bA \in \mathbb{R}_{+}^{m \times n}$. Note that we have the relationship $\bSigma_{q} = \text{diag}(\text{vec}(\bA^2))$ where the elementwise-squared $\bA$ is vectorized by stacking its columns, and then expanded as a diagonal matrix into~$\mathbb{R}_{+}^{mn \times mn}$.

In the sequel, we start by empirically studying the properties of the spectrum of matrices $\Ab$ \textit{post-training} (after convergence), while using standard Gaussian mean-field variational 
distributions. Interestingly, we observe that those matrices naturally exhibit a low-rank structure (see Section~\ref{sec:post_training_low_rank} for the corresponding experiments), i.e,
\begin{equation}
\bA \approx \mathbf{U} \mathbf{V}^T
\end{equation}
for some $\bU \in \mathbb{R}^{m \times k}$, $\bV \in \mathbb{R}^{n \times k}$ and $k$ a small value (e.g., 2 or 3).
This observation motivates the introduction of the following variational family, which we name $k$-tied Normal:
\begin{empheq}[box=\widefbox]{align}
\label{eq:k_tied_normal}
\begin{split}
\ktiedN(\bW; \bmu_q, \bU, \bV) = \\  \mathcal{N}\big(\bmu_q, \mdiag\big(\mvec\big( (\mathbf{U} \mathbf{V}^T)^2 \big)\big)\big),
\end{split}
\end{empheq}
where the squaring of the matrix $\mathbf{U} \mathbf{V}^T$ is applied elementwise.
Due to the tied parameterization of the diagonal covariance matrix, we emphasize that this variational family is \textit{smaller}---i.e., included in---the standard Gaussian mean-field variational distribution family.

\begin{table}
\centering%
\resizebox{0.46\textwidth}{!}{
\begin{tabular}{lc}\hline
Variational family & Parameters (total)\\\hline
multivariate Normal & $mn + \frac{mn \, (mn+1)}{2}$\\
diagonal Normal & $mn + mn$\\
$\mathcal{MN}$ & $mn + \frac{m (m+1)}{2} + \frac{n (n+1)}{2}$\\
$\mathcal{MN}$(diagonal) & $mn + m + n$\\
$k$-tied Normal (ours) & $mn + k(m + n)$\\\hline
\end{tabular}%
}
\caption{Number of variational parameters for a variational family for a matrix $
\bW \in \mathbb{R}^{m\times n}$. $\mathcal{MN}$(diagonal) is from \citet{louizos2016structured}.}\label{tab:complexity}
\vspace{-0.25cm}%
\end{table}
As formally discussed in Appendix~\ref{appendix:proof}, the matrix variate Gaussian distribution \citep{gupta2018matrix}, referred to as $\mathcal{MN}$ and already used for variational inference by~\citet{louizos2016structured} and \citet{sun2017learning}, is related to our $k$-tied Normal distribution with $k=1$ when $\mathcal{MN}$ uses diagonal row and column covariances. Interestingly, we prove that for $k\geq 2$, our $k$-tied Normal distribution cannot be represented by any $\mMN$ distribution. This illustrates the main difference of our approach from the most closely related previous work of~\citet{louizos2016structured}. %

Notice that our diagonal covariance $\bSigma_q$ repeatedly reuses the same elements of $\bU$ and $\bV$, which results in parameter sharing across different weights. 
The total number of the standard deviation parameters in our method is $k(m + n)$ from $\bU$ and $\bV$, compared to $mn$ from $\bA$ in the standard GMFVI parameterization. Given that in our experiments the $k$ is very low (e.g. $k=2$) this reduces the number of parameters from quadratic to linear in the dimensions of the layer, see Table~\ref{tab:complexity}. More importantly, such parameter sharing across the weights leads to higher signal-to-noise ratio during training and thus in some cases faster convergence. We demonstrate this phenomena in the next section. In the rest of this section, we show that the standard GMFVI methods already learn a low-rank structure in the posterior standard deviation matrix $\bA$. Furthermore, we provide evidence that replacing $\bA$ with its low-rank approximation does not degrade the predictive performance and the quality of uncertainty estimates.

\subsection{Experimental setting}

Before describing the experimental results, we briefly explain the key properties of the experimental setting. We analyze three types of GMFVI Bayesian neural network models: 
\begin{compactitem}
\item Multilayer Perceptron (MLP): a network of 3 dense layers and ReLu activations that we train on the MNIST dataset \citep{lecun-mnisthandwrittendigit-2010}. We use the last 10,000 examples of the training set as a validation set.
\item Convolutional Neural Network (CNN): a LeNet architecture \citep{lecun1998gradient} with 2 convolutional layers and 2 dense layers that we train on the CIFAR-100 dataset \citep{krizhevsky2009learning}. We use the last 10,000 examples of the training set as a validation set.
\item Long Short-Term Memory (LSTM): a model that consists of an embedding and an LSTM cell \citep{hochreiter1997long}, followed by a single unit dense layer. We train it on the IMDB dataset \citep{maas2011learning}, in which we use the last 5,000 examples of the training set as a validation set.
\item Residual Convolutional Neural Network (ResNet): a ResNet-18\footnote{See: \url{https://github.com/tensorflow/probability/blob/master/tensorflow_probability/examples/cifar10_bnn.py}.} architecture~\citep{he2016resnet} trained on all 50,000 training examples of the CIFAR-10 dataset \citep{krizhevsky2009cifar}.
\end{compactitem}

In each of the four models, we use the standard mean-field Normal variational posterior and a Normal prior, for which we set a single scalar standard deviation hyper-parameter shared by all layers. Appendix~\ref{apx:heprior} contains an ablation study result with an alternative prior. We optimize the variational posterior parameters using the Adam optimizer \citep{kingma2014adam}. For a more comprehensive explanation of the experimental setup please refer to Appendix \ref{apx:exp_details}. Finally, we highlight that our experiments focus primarily on the comparison across a broad range of model types rather than competing with the state-of-the-art results over the specifically used datasets. %
Nevertheless, we also show that our results extend to larger models with competitive performance such as the ResNet-18 model. Note that scaling GMFVI to such larger model sizes is still a challenging research problem~\citep{osawa2019practicaldeeplearning}.

\subsection{Main experimental observation}\label{sec:post_training_low_rank}
Our main experimental observation is that the standard GMFVI learns posterior standard deviation matrices that have a low-rank structure across different model types (MLP, CNN, LSTM), model sizes (LeNet, ResNet-18) and layer types (dense, convolutional). 
To show this, we investigate the results of the SVD decomposition of posterior standard deviation matrices $\bA$ in the four described models types. We analyze the models \textit{post-training}, where the models are already trained until ELBO convergence using the standard GMFVI approach. 
While for the first three models (MLP, CNN and LSTM), we evaluate the low-rank structure only in the dense layers, for the ResNet model we consider the low-rank structure in the convolutional layers as well.

Figure~\ref{fig:exp_var} shows the fraction of variance explained per each singular value $k$ from the SVD decomposition of matrices $\bA$ in the dense layers of the first three models. The fraction of variance explained per singular value $k$ is calculated as $ \gamma_k^2 / \sum_{i^\prime}\gamma_{i^\prime}^2$,
where $\gamma$ are the singular values. %
We observe that, unlike posterior means, the posterior standard deviations have most of their variance explained by the first few singular values. In particular, a rank-1 approximation of $\bA$ explains most of its variance, while a rank-2 approximation can encompass nearly all of the remaining variance. 
Figure~\ref{fig:heatmap_stddev} %
further supports this claim visually by comparing the heat maps of the matrix $\bA$ and its rank-1 and rank-2 approximations. In particular, we observe that the rank-2 approximation heat map looks visually very similar to $\bA$, while this is not the case for the rank-1 approximation. 
Importantly, Figure \ref{fig:lowrank_approx_resnet_short} illustrates that the same low-rank structure is also present in both the dense and the convolutional layers of the larger ResNet-18 model.
In the analysis of the above experiments, we use the shorthand SEM to refer to the standard error of the mean.

\begin{figure*}[!t]
\begin{minipage}[b]{1.0\linewidth}
\centering
\includegraphics[width=170mm]{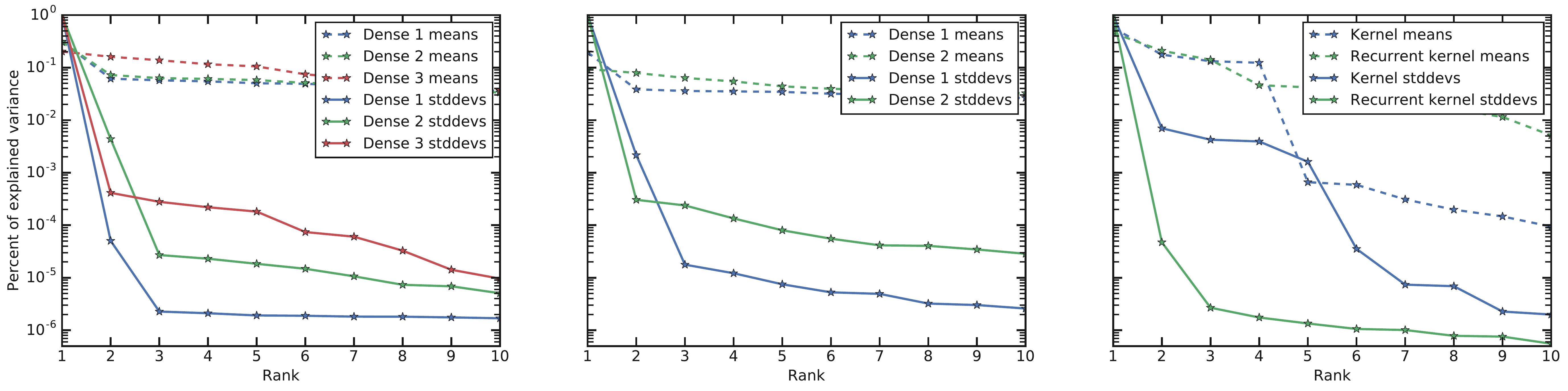}
\vspace{-0.2cm}%
\caption{Fraction of variance explained per each singular value from SVD of matrices of posterior means and posterior standard deviations post-training in different dense layers of three model types trained using standard GMFVI: MLP (left), CNN (center), LSTM (right). Unlike posterior means, posterior standard deviations clearly display strong low-rank structure, with most of the variance contained in the top few singular values.} 
\label{fig:exp_var}
\end{minipage}
\vspace{-0.5cm}
\end{figure*}

\begin{figure*}[!t] %
    \centering
    \includegraphics[width=\textwidth]{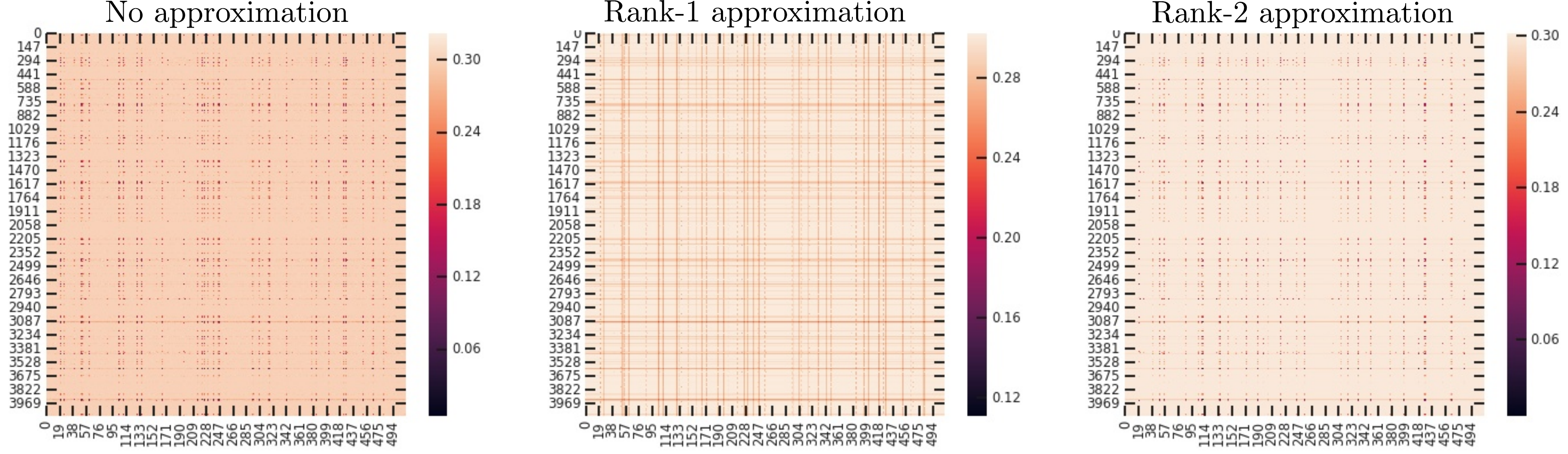}
    \captionof{figure}{Post-training heat maps of the reshaped diagonal posterior standard deviation matrix for the first dense layer of a LeNet CNN trained using GMFVI on the CIFAR-100 dataset. %
    Unlike the rank-1 approximation, the rank-2 approximation already looks visually very similar to the matrix with no approximation. This is consistent with the quantitative results from Figure~\ref{fig:exp_var}, where the first two singular values explain most of the variance in the reshaped diagonal posterior standard deviation matrix.}%
\label{fig:heatmap_stddev}%
\end{figure*}

\begin{table*}[ht]
\resizebox{\textwidth}{!}{
\begin{tabular}{lccc|ccc|ccc}
\hline
     & \multicolumn{3}{c|}{MNIST, MLP}                    & \multicolumn{3}{c|}{CIFAR100, CNN}                & \multicolumn{3}{c}{IMDB, LSTM}                      \\ \hline \hline
Method & -ELBO  $\,\downarrow$       & NLL $\,\downarrow$          & Accuracy $\,\uparrow$  & -ELBO $\,\downarrow$      & NLL $\,\downarrow$        & Accuracy $\,\uparrow$  & -ELBO   $\,\downarrow$      & NLL   $\,\downarrow$        & Accuracy   $\,\uparrow$   \\ \hline
Mean-field & 0.431\spm{0.0057} & 0.100\spm{0.0034} & 97.6\spm{0.15} & 3.83\spm{0.020} & 2.23\spm{0.017} & 42.1\spm{0.49} & 0.536\spm{0.0058} & 0.493\spm{0.0057} & 80.1\spm{0.25} \\ \hline
1-tied   & 3.41\spm{0.019}   & 0.677\spm{0.0040} & 93.6\spm{0.25} & 4.33\spm{0.021} & 2.30\spm{0.016} & 41.7\spm{0.49} & 0.687\spm{0.0058} & 0.491\spm{0.0056} & 80.0\spm{0.25} \\
2-tied    & 0.456\spm{0.0059} & 0.107\spm{0.0033} & 97.6\spm{0.15} & 3.88\spm{0.020} & 2.24\spm{0.017} & 42.2\spm{0.49} & 0.621\spm{0.0058} & 0.494\spm{0.0057} & 80.1\spm{0.25} \\
3-tied    & 0.450\spm{0.0059} & 0.106\spm{0.0033} & 97.6\spm{0.15} & 3.86\spm{0.020} & 2.24\spm{0.017} & 42.1\spm{0.49} & 0.595\spm{0.0058} & 0.493\spm{0.0056} & 80.1\spm{0.25} \\ \hline
\end{tabular}%
}%
\vspace{-0.1cm}%
\caption{Impact of post-training low-rank approximation of the GMFVI-trained posterior standard deviation matrix on model's ELBO and predictive performance, for three types of models. We report mean and SEM of each metric across 100 weights samples. The low-rank approximations with ranks higher than one achieve predictive performance close to that of mean-field without the approximations. }%
\label{tab:lowrank_approx_v1}%
\vspace{-0.2cm}%
\end{table*}

\begin{figure}[ht]
\vspace{-0.1cm}%
\begin{minipage}[b]{1.0\linewidth}
\begin{minipage}[b]{1.0\linewidth}
\centering
\includegraphics[width=65mm]{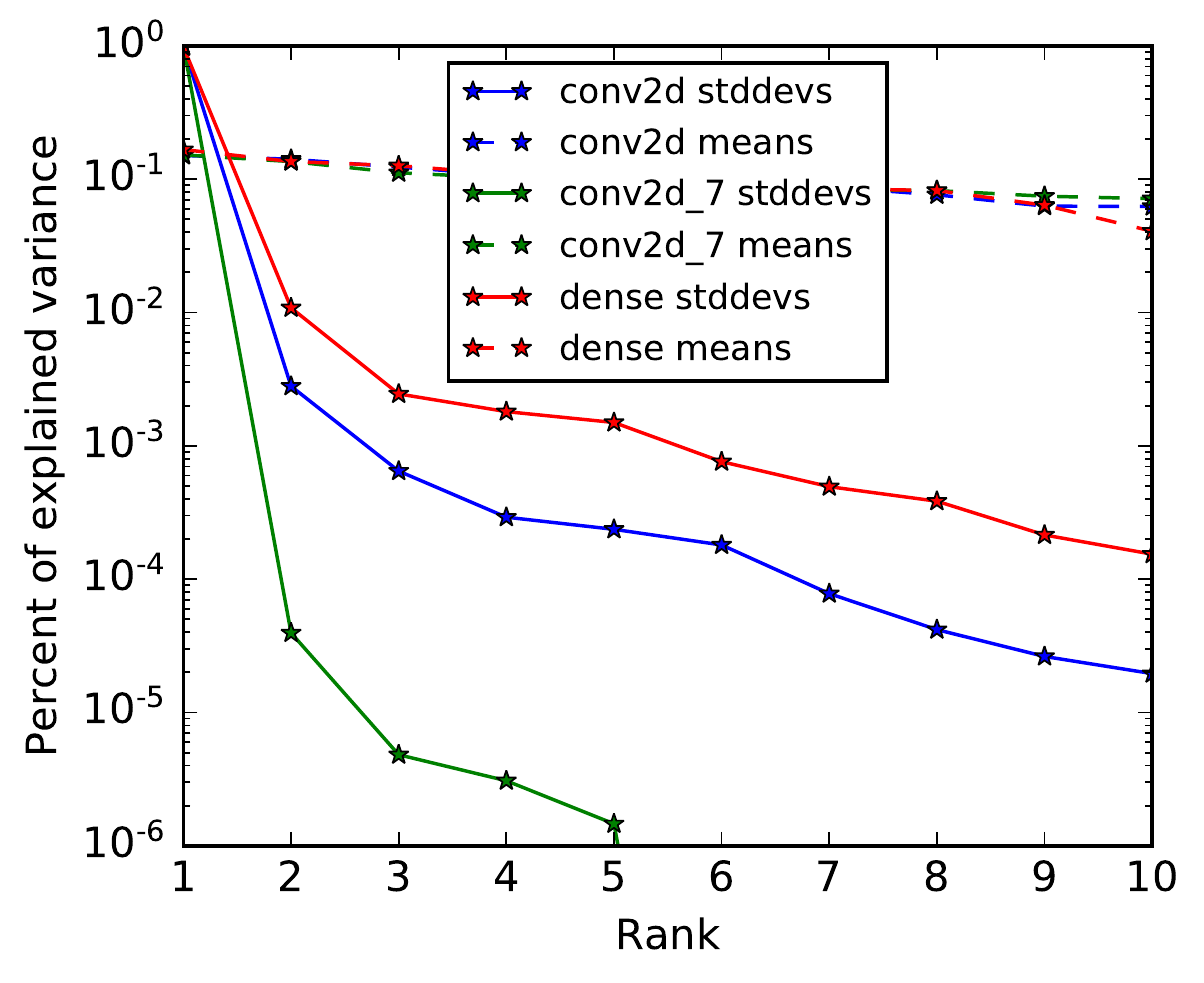}
\end{minipage} \\
\begin{minipage}[b]{1.0\linewidth}
\centering
\resizebox{0.90\textwidth}{!}{
\centering
\begin{tabular}{lccccc}
\hline
Method & -ELBO  $\,\downarrow$ & NLL $\,\downarrow$ & Accuracy $\,\uparrow$  \\ \hline \hline
Mean-field & 122.61\spm{0.012}  & 0.495\spm{0.0080} & 83.5\spm{0.37} \\ \hline
1-tied    & 122.57\spm{0.012} & 0.658\spm{0.0069} & 81.7\spm{0.39}  \\
2-tied    & 122.77\spm{0.012} & 0.503\spm{0.0080} & 83.2\spm{0.37}  \\
3-tied    & 122.67\spm{0.012} & 0.501\spm{0.0079} & 83.2\spm{0.37}  \\ \hline
\end{tabular}%
}%
\end{minipage}%
\caption{Unlike posterior means, the posterior standard deviations of both dense and convolutional layers in the ResNet-18 model trained using standard GMFVI display strong low-rank structure post-training and can be approximated without loss in predictive metrics. Top: Fraction of variance explained per each singular value of the matrices of converged posterior means and standard deviations. Bottom: Impact of post-training low-rank approximation of the posterior standard deviation matrices on the model's performance. We report mean and SEM of each metric across 100 weights samples.}
\label{fig:lowrank_approx_resnet_short}%
\end{minipage}
\vspace{-0.5cm}%
\end{figure}

\subsection{Low-rank approximation of mean field posterior standard deviations}

Motivated by the above observations, we show that it is possible to replace the reshaped diagonal posterior standard deviation matrices $\bA$ with their low-rank approximations without decreasing predictive performance and the quality of uncertainty estimates. Table \ref{tab:lowrank_approx_v1} shows the performance comparison of the MLP, CNN and LSTM models with different ranks of the approximations. %
Figure \ref{fig:lowrank_approx_resnet_short} contains analogous results for the ResNet-18 model. The results show that the post-training approximations of the mean field posterior covariance with ranks higher than one achieve predictive performance close to that of the mean field posterior with no approximations for all the analyzed model types, model sizes and layer types. 
Furthermore, Table \ref{tab:calibration} shows that, for the ResNet-18 model, the approximations with ranks higher than one also do not decrease the quality of the uncertainty estimates compared to the mean field posterior with no approximations\footnote{We compute the Brier Score and the ECE using the implementations from the TensorFlow Probability \citep{dillon2017tensorflow} Stats module: \url{https://www.tensorflow.org/probability/api_docs/python/tfp/stats}.}.
These observations could be used as a form of post-training network compression. Moreover, they give rise to further interesting exploration directions such as formulating posteriors that exploit such a low-rank structure. In the next section we explore this particular direction while focusing on the first three model types (MLP, CNN, LSTM).

\begin{table}[]
    \centering
    \resizebox{0.42\textwidth}{!}{
    \centering
    \begin{tabular}{lccccc}
    \hline
    Method & Brier Score $\,\downarrow$ & NLL $\,\downarrow$ & ECE $\,\downarrow$  \\ \hline \hline
    Mean-field & -0.761\spm{0.0039}  & 0.495\spm{0.0080} & 0.0477 \\ \hline
    1-tied    & -0.695 \spm{0.0034} & 0.658\spm{0.0069} & 0.1642  \\
    2-tied    & -0.758\spm{0.0038} & 0.503\spm{0.0080} & 0.0540  \\
    3-tied    & -0.758\spm{0.0038} & 0.501\spm{0.0079} & 0.0541  \\ \hline
    \end{tabular}%
    }
    \caption{Quality of predictive uncertainty estimates for the ResNet-18 model on the CIFAR10 dataset without and with
    post-training low-rank approximations of the GMFVI posterior standard deviation matrices in all the layers of the model. The approximations with ranks $k \geq 2$  
    match the quality of the predictive uncertainty estimates from the mean-field posteriors without the approximations. The quality of the predictive uncertainty estimates is measured by the negative log-likelihood (NLL), the Brier Score and the ECE (with 15 bins). 
    For the NLL and the Brier Score metrics we report mean and SEM across 100 weights samples.}
    \label{tab:calibration}
    \vspace{-0.5cm}%
\end{table}

\section{The $k$-tied Normal Distribution: Exploiting Low-Rank Parameter-Structure in Mean Field Posteriors}
\label{sec:ktied}

In the previous section we have shown that it is possible to replace a reshaped diagonal matrix of posterior standard deviations, which is already trained using GMFVI, with its low-rank approximation without decreasing the predictive performance. In this section, we show that it is also possible to exploit this observation during training time. We achieve this by exploiting our novel variational family, the $k$-tied Normal distribution (see Section~\ref{sec:methodology}).

We show that using this distribution in the context of GMFVI in Bayesian neural networks allows to reduce the number of network parameters, increase the signal-to-noise ratio of the stochastic gradient estimates and speed up model convergence, while maintaining the predictive performance of the standard parameterization of the GMFVI. 
We start by recalling the definition of the $k$-tied Normal distribution:
$$
\ktiedN(\bW; \bmu_q, \bU, \bV) =  \mathcal{N}\big(\bmu_q, \mdiag\big(\mvec\big( (\mathbf{U} \mathbf{V}^T)^2 \big)\big)\big)
$$
where the variational parameters are comprised of $\{\bmu_q, \bU, \bV\}$.

\subsection{Experimental setting}
We now introduce the experimental setting in which we evaluate the GMFVI variational posterior parameterized by the $k$-tied Normal distribution. We assess the impact of the described posterior in terms of predictive performance and reduction in the number of parameters for the same first three model types (MLP, CNN, LSTM) and respective datasets (MNIST, CIFAR-100, IMDB) as we used in the previous section. Additionally, we also analyze the impact of $k$-tied Normal posterior on the signal-to-noise ratio of stochastic gradient estimates of the variational lower bound for the CNN model as a representative example. 
Overall, the experimental setup is very similar to the one introduced in the previous section. Therefore, we highlight only the key differences here. 

We apply the $k$-tied Normal variational posterior distribution to the same layers which we analyzed in the previous section. 
Namely, we use the $k$-tied Normal variational posterior for all the three layers of the MLP model, the two dense layers of the CNN model and the LSTM cell's kernel and recurrent kernel. We initialize the parameters $u_{ik}$ from $\bU$ and $v_{jk}$ from $\bV$ of the $k$-tied Normal distribution so that after the outer-product operation the respective standard deviations $\sigma_{ij}$ have the same mean values as we obtain in the standard GMFVI posterior parameterization. In the experiments for this section, we use \textit{KL annealing}~\citep{sonderby2016train}, where we linearly scale-up the contribution of the $D_{\textrm{KL}}[q_{\btheta}(\bw) || p(\mathbf{w})]$ term in Equation~\ref{eq:elbo} from zero to its full contribution over the course of training. %
Appendix~\ref{apx:kladam} describes the impact of KL annealing on the modelled uncertainty.
Furthermore, additional details on the experimental setup are available in Appendix~\ref{apx:exp_details}.

\subsection{Experimental results}
We first investigate the predictive performance of the GMFVI Bayesian neural network models trained using the $k$-tied Normal posterior distribution, with different levels of tying $k$. We compare these results to those obtained from the same models, but trained using the standard parameterization of the GMFVI. Figure \ref{tab:exploit_res} (left) shows 
that for $k \geq 2$ the $k$-tied Normal posterior is able to achieve the performance competitive with the standard GMFVI posterior parameterization, while reducing the total number of model parameters. 
The benefits of using the $k$-tied Normal posterior are the most visible for models where the layers with the $k$-tied Normal posterior constitute a significant portion of the total number of the model parameters (e.g. the MLP model). %

We further investigate the impact of the $k$-tied Normal posterior on the signal-to-noise ratio (SNR)\footnote{SNR for each gradient value is calculated as $E[g_b^2] / \mathrm{Var}[g_b]$, where $g_{b}$ is the gradient value for a single parameter. The expectation $E$ and variance $Var$ of the gradient values $g_b$ are calculated over a window of last 10 batches.} of stochastic gradient estimates of the variational lower bound (ELBO). In particular, we focus on the gradient SNR of the GMFVI posterior standard deviation parameters for which we perform the tying.
These parameters are either $u_{ik}$ and $v_{jk}$ for the $k$-tied Normal posterior or $\sigma_{ij}$ for the standard GMFVI parameterization, all optimized in their log forms for numerical stability. Figure \ref{tab:exploit_res} (top right) shows that the $u_{ik}$ and $v_{jk}$ parameters used in the $k$-tied Normal posterior are trained with significantly higher gradient SNR than the $\sigma_{ij}$ parameters used in the standard GMFVI parameterization.
Consequently, Figure \ref{tab:exploit_res} (bottom right) shows that the increased SNR from the $k$-tied Normal distribution translates into faster convergence for the MLP model, which uses the $k$-tied Normal distribution in all of its layers. 

Note that the $k$-tied Normal posterior does not increase the training step time compared to the standard parameterization of the GMFVI, see Table \ref{tab:speed} %
for the support of this claim\footnote{Code to compare the training step times of the $k$-tied Normal and the standard GMFVI is available under: \url{https://colab.research.google.com/drive/14pqe_VG5s49xlcXB-Jf8S9GoTFyjv4OF}. The code uses the network architecture from: \url{https://github.com/tensorflow/docs/blob/master/site/en/tutorials/keras/classification.ipynb}.}. Therefore, the $k$-tied Normal posterior speeds up model convergence also in terms of wall-clock time.

Figure \ref{fig:exploit_results} shows the convergence plots of validation negative ELBO for all the three model types. 
We observe that the impact of the $k$-tied Normal posterior on convergence depends on the model type. As shown in Figure \ref{tab:exploit_res} (bottom right),  the impact on the MLP model is strong and consistent with the $k$-tied Normal posterior increasing convergence speed compared to the standard GMFVI parameterization. For the LSTM model we also observe a similar speed-up. However, for the CNN model the impact of the $k$-Normal posterior on the ELBO convergence is much smaller. We hypothesize that this is due to the fact that we use the $k$-tied Normal posterior for all the layers trained using GMFVI in the MLP and the LSTM models, while in the CNN model we use the $k$-tied Normal posterior only for some of the GMFVI trained layers. More precisely, in the CNN model we use the $k$-tied Normal posterior only for the two dense layers, while the two convolutional layers are trained using the standard parameterization of the GMFVI.

\begin{figure*}[ht]
\begin{minipage}[b]{1.0\linewidth}
\begin{minipage}[c]{0.63\linewidth}
\resizebox{\textwidth}{!}{
\begin{tabular}{lc|cccc}
\hline
Model \& Dataset   & Method &  -ELBO $\,\downarrow$ & NLL $\,\downarrow$ & Accuracy $\,\uparrow$ & \#Par. [k]$\,\downarrow$ \\
 \hline \hline
MNIST, MLP  & Mean-field  & 0.501\spm{0.0061}   & 0.133\spm{0.0040}   & 96.8\spm{0.18} &  957 \\ \hline
MNIST, MLP    & 1-tied & 0.539\spm{0.0063}   & 0.155\spm{0.0043}   & 96.1\spm{0.19}  & 482  \\
MNIST, MLP    & 2-tied  & 0.520\spm{0.0063}   & 0.129\spm{0.0039}   & 96.8\spm{0.18} & 484 \\
MNIST, MLP    & 3-tied  & 0.497\spm{0.0060}   & 0.120\spm{0.0038}   & 96.9\spm{0.18} & 486  \\ \hline
CIFAR100, CNN  & Mean-field & 3.72\spm{0.018}   & 2.16\spm{0.016}   & 43.9\spm{0.50} &  4,405  \\ \hline
CIFAR100, CNN     & 1-tied & 3.65\spm{0.017}   & 2.12\spm{0.015}   & 45.5\spm{0.50}  & 2,262  \\
CIFAR100, CNN     & 2-tied & 3.76\spm{0.019}   & 2.15\spm{0.016}   & 44.3\spm{0.50} & 2,268  \\ 
CIFAR100, CNN & 3-tied   & 3.73\spm{0.018}   & 2.13\spm{0.016}   & 44.3\spm{0.50} & 2,273  \\ \hline
IMDB, LSTM  & Mean-field & 0.538\spm{0.0054}   & 0.478\spm{0.0052}   & 79.5\spm{0.26} &  2,823 \\ \hline
IMDB, LSTM  & 1-tied & 0.592\spm{0.0041}   & 0.512\spm{0.0040}   & 77.6\spm{0.26}  & 2,693 \\
IMDB, LSTM & 2-tied & 0.560\spm{0.0042}   & 0.484\spm{0.0041}   & 78.2\spm{0.26} & 2,694 \\
IMDB, LSTM & 3-tied & 0.550\spm{0.0051}   & 0.491\spm{0.0050}   & 78.8\spm{0.26} & 2,695 \\ \hline
\end{tabular}%
}
\end{minipage}
\hfill
\begin{minipage}[]{0.34\linewidth}
\centering
\begin{minipage}[t]{1.00\linewidth}
\centering
\resizebox{\textwidth}{!}{
\begin{tabular}{c|ccc}
\hline
\multirow{2}{*}{Method} & \multicolumn{3}{c}{MNIST, MLP Dense 2, SNR at step} \\
 & 1000 &  5000 & 9000 \\
 \hline \hline
Mean-field  & 
 4.13\spm{0.027}    & 4.45\spm{0.091} & 3.21\spm{0.035} \\ \hline
 1-tied & 
5840\spm{190}     & 158\spm{3.8} & 5.3\spm{0.20} \\
 2-tied  & 
7500\spm{240}       & 140\spm{11} & 4.3\spm{0.26} \\
 3-tied  & 
7000\spm{270} & 117\spm{1.7}& 4.1\spm{0.20} \\ \hline
\end{tabular}%
}
\end{minipage} \\ 
\vspace{0.5cm}%
\begin{minipage}[b]{1.00\linewidth}
\centering
\resizebox{\textwidth}{!}{
\begin{tabular}{c|ccc}
\hline
 \multirow{2}{*}{Method} & \multicolumn{3}{c}{MNIST, MLP, -ELBO at step} \\
 & 1000 &  5000 & 9000 \\
 \hline \hline
 Mean-field  & 
 42.16\spm{0.070}    & 26.52\spm{0.016} & 15.39\spm{0.016} \\ \hline
 1-tied & 
43.11\spm{0.039}     & 14.85\spm{0.017} & 2.06\spm{0.027} \\
 2-tied  & 
42.74\spm{0.090}       &  13.97\spm{0.023} & 1.82\spm{0.017} \\
 3-tied  & 
42.63\spm{0.068} & 13.61\spm{0.020}& 1.80\spm{0.031} \\ \hline
\end{tabular}%
}
\end{minipage}%
\end{minipage}
\caption{Left: impact of the $k$-tied Normal posterior on test ELBO, test predictive performance and number of model parameters. We report the test metrics on the test splits of the respective datasets as a mean and SEM across 100 weights samples after training each of the models for $\approx$300 epochs. The $k$-tied Normal distribution with rank $k \geq 2$ allows to train models with smaller number of parameters without decreasing the predictive performance.
Top right: mean gradient SNR in the log posterior standard deviation parameters of the Dense 2 layer of the MNIST MLP model at increasing training steps for different ranks of tying $k$. The $k$-tied Normal distribution significantly increases the SNR for these parameters.
We observe a similar increase in the SNR from tying in all the layers that use the $k$-tied Normal posterior.
Bottom right: negative ELBO on the MNIST validation dataset at increasing training steps for different ranks of tying $k$. The higher SNR from the $k$-tied Normal posterior translates into the increased convergence speed for the MLP model. %
We report mean and SEM across 3 training runs with different random seeds in both the top right and the bottom right table. 
}
\label{tab:exploit_res}%
\end{minipage}
\end{figure*}

\begin{figure*}[ht]%
\vspace{-0.1cm}%
\begin{minipage}[b]{1.0\linewidth}%
\centering%
\includegraphics[width=170mm]{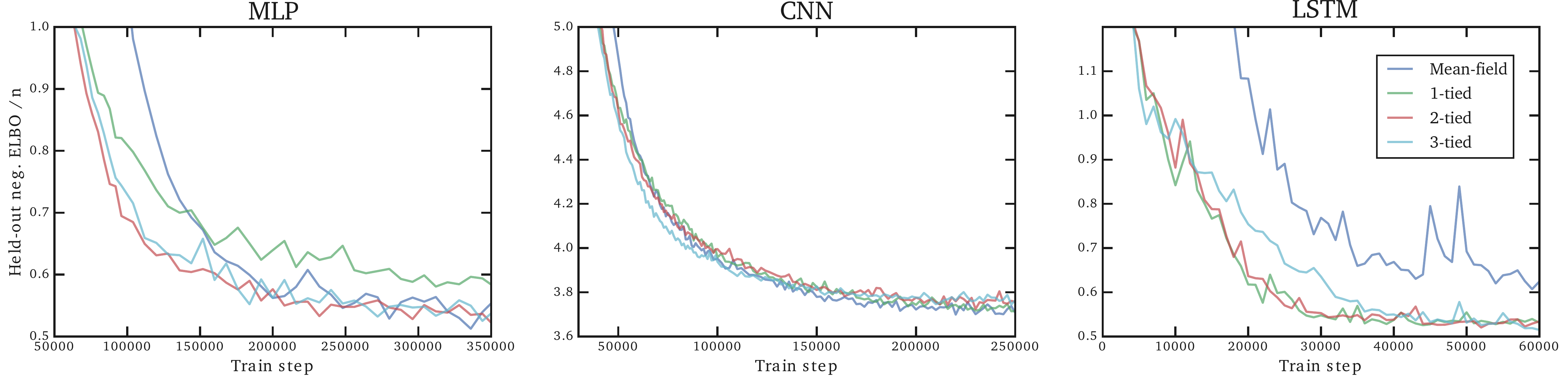}%
\end{minipage}%
\vspace{-0.2cm}%
\caption{Convergence of negative ELBO (lower is better) for the three model types on their respective validation datasets when training with mean-field and $k$-tied variational posteriors. The $k$-tied Normal posteriors result in faster initial convergence for the MLP and LSTM models. For the CNN models the speed-up is not as significant when using the $k$-tied Normal posterior only for the two dense layers of the LeNet architecture.}%
\label{fig:exploit_results}%
\vspace{-0.3cm}%
\end{figure*}

\begin{table}%
    \centering
    \resizebox{0.42\textwidth}{!}{
    \centering
    \begin{tabular}{l|c}
    \hline
    Training method               & Train step time {[}ms{]} $\,\downarrow$\\ \hline \hline
    Point estimate      & 2.00\spm{0.0064}             \\ \hline
    Standard GMFVI & 7.17\spm{0.014}              \\ \hline
    2-tied Normal GMFVI        & 6.14\spm{0.018}              \\ \hline
    \end{tabular}
    }
      \captionof{table}{Training step evaluation times for a simple model architecture with two dense layers
      for different training methods.
      We report mean and SEM of evaluation times across a single training run in the Google Colab environment linked in the footnote. The $k$-tied Normal posterior with $k = 2$ does not increase the train step evaluation times compared to the standard parameterization of the GMFVI posterior. We expect this to hold more generally because the biggest additional operation per step when using the $k$-tied Normal posterior is the $\mathbf{U} \mathbf{V}^T$
    multiplication to materialize the matrix of posterior standard deviations $\mathbf{A}$, where $\bU \in \mathbb{R}^{m \times k}$, $\bV \in \mathbb{R}^{n \times k}$ and $k$ is a small value (e.g., 2 or 3).
    The time complexity of this operations is $\mathcal{O}(kmn)$, which is usually negligible compared to the time complexity of data-weight matrix multiplication $\mathcal{O}(bmn)$, where $b$ is the batch size.
}
      \label{tab:speed}
    \vspace{-0.3cm}%
    \end{table}

\section{Related Work}\label{sec:related_work}

The application of variational inference to neural networks dates back at least to~\citet{peterson1987mean} and  \citet{hinton1993keeping}.
Many developments\footnote{We refer the interested readers to \citet{zhang2018advances} for a recent review of variational inference.} have followed those seminal research efforts, in particular regarding (1) the expressiveness of the variational posterior distribution and (2) the way the variational parameters themselves can be structured to lead to compact, easier-to-learn and scalable formulations. We organize the discussion of this section around those two aspects, with a specific focus on the Gaussian case. 

\paragraph{Full Gaussian posterior.} Because of their substantial memory and computational cost, Gaussian variational distributions with full covariance matrices have been primarily applied to (generalized) linear models and shallow neural networks~\citep{jaakkola1997variational, barber1998ensemble, marlin2011piecewise,titsias2014doubly,miller2017variational, ong2018gaussian}.

To represent the dense covariance matrix efficiently in terms of variational parameters, several schemes have been proposed, including the sum of low-rank plus diagonal matrices~\citep{barber1998ensemble, seeger2000bayesian, miller2017variational, zhang2017noisy, ong2018gaussian}, the Cholesky decomposition~\citep{challis2011concave} or by operating instead on the precision matrix~\citep{tan2018gaussian,mishkin2018slang}.

\paragraph{Gaussian posterior with block-structured covariances.}  
In the context of Bayesian neural networks, the layers represent a natural structure to be exploited by the covariance matrix. When assuming independence across layers, the resulting covariance matrix exhibits a \textit{block-diagonal structure} that has been shown to be a well-performing simplification of the dense setting~\citep{sun2017learning, zhang2017noisy}, with both memory and computational benefits.

Within each layer, the corresponding diagonal block of the covariance matrix can be represented by a Kronecker product of two smaller matrices~\citep{louizos2016structured, sun2017learning}, possibly with a parameterization based on rotation matrices~\citep{sun2017learning}.
Finally, using similar techniques, \citet{zhang2017noisy} proposed to use a block tridiagonal structure that better approximates the behavior of a dense covariance.

\paragraph{Fully factorized mean-field Gaussian posterior.}
A fully factorized Gaussian variational distribution constitutes the simplest option for variational inference. The resulting covariance matrix is diagonal and all underlying parameters are assumed to be independent. While the mean-field assumption is known to have some limitations---e.g., underestimated variance of the posterior distribution~\citep{turnersahani2011} and robustness issues~\citep{giordano2018covariances}---it leads to scalable formulations, with already competitive performance, as for instance illustrated by the recent uncertainty quantification benchmark of~\citet{ovadia2019can}.

Because of its simplicity and scalability, the fully-factorized Gaussian variational distribution has been widely used for Bayesian neural networks~\citep{graves2011practicalvb, ranganath2014black, blundell2015weightuncertainty, hernandez2015probabilisticbp,zhang2017noisy, khan2018fast}.

Our approach can be seen as an attempt to further reduce the number of parameters of the (already) diagonal covariance matrix. Closest to our approach is the work of~\citet{louizos2016structured}. Their matrix variate Gaussian distribution instantiated with the Kronecker product of the diagonal row- and column-covariance matrices leads to a rank-1 tying of the posterior variances. In contrast, we explore tying strategies beyond the rank-1 case, which we show to lead to better performance (both in terms of ELBO and predictive metrics). Importantly, we further prove that tying strategies with a rank greater than one cannot be represented in a matrix variate Gaussian distribution, thus clearly departing from~\citet{louizos2016structured} (see Appendix~\ref{appendix:proof} for details).

Our approach can be also interpreted as a form of \textit{hierarchical} variational inference from ~\citet{ranganath2016hierarchical}. In this interpretation, the prior on the variational parameters corresponds to a Dirac distribution, non-zero only when a pre-specified low-rank tying relationship holds. 
More recently, \citet{karaletsos2018probabilistic}
proposed a hierarchical structure which also couples network weights, but achieves this by introducing 
representations of network units as latent variables.

Our reduction in the number parameters through tying decreases the variance of the stochastic gradient estimates of the ELBO objective for the posterior standard deviation parameters.
Alterantive methods for the variance reduction propose to either change the noise sampling scheme~\cite{kingma2015variationaldropout, wen2018flipout, farquhar_radial_2020} or to determinize the variational posterior approximation \cite{wu2019dvi}.
Those methods can be combined with our method, but some of them are not valid in cases when our method is applicable (e.g. the method from \citet{kingma2015variationaldropout} is not valid for convolutional layers).

We close this related work section by mentioning the existence of other strategies to produce more flexible approximate posteriors, e.g., normalizing flows~\citep{rezende2015variational} and extensions thereof~\citep{louizos2017multiplicative}.

\section{Conclusion}
In this work we have shown that Bayesian Neural Networks trained with standard Gaussian Mean-Field Variational Inference learn posterior standard deviation matrices that can be approximated with little information loss by low-rank SVD decompositions. This suggests that richer parameterizations of the variational posterior may not always be needed, and that compact parameterizations can also work well. We used this insight to propose a simple, yet effective variational posterior parameterization, which speeds up training and reduces the number of variational parameters without degrading predictive performance on %
a range of model types.

In future work, we hope to scale up variational inference with compactly parameterized approximate posteriors to much larger models and more complex problems. For mean-field variational inference to work well in that setting several challenges will likely need to be addressed \citep{wenzel2020good}; improving the signal-to-noise ratio of ELBO gradients using our compact variational parameterizations may provide a piece of the puzzle.

\bibliography{icml2020}

\begin{thebibliography}{59}
\providecommand{\natexlab}[1]{#1}
\providecommand{\url}[1]{\texttt{#1}}
\expandafter\ifx\csname urlstyle\endcsname\relax
  \providecommand{\doi}[1]{doi: #1}\else
  \providecommand{\doi}{doi: \begingroup \urlstyle{rm}\Url}\fi

\bibitem[Barber \& Bishop(1998)Barber and Bishop]{barber1998ensemble}
Barber, D. and Bishop, C.~M.
\newblock Ensemble learning for multi-layer networks.
\newblock In \emph{Advances in neural information processing systems}, pp.\
  395--401, 1998.

\bibitem[Blei et~al.(2017)Blei, Kucukelbir, and McAuliffe]{blei2017variational}
Blei, D.~M., Kucukelbir, A., and McAuliffe, J.~D.
\newblock Variational inference: A review for statisticians.
\newblock \emph{Journal of the American Statistical Association}, 112\penalty0
  (518):\penalty0 859--877, 2017.

\bibitem[Blundell et~al.(2015)Blundell, Cornebise, Kavukcuoglu, and
  Wierstra]{blundell2015weightuncertainty}
Blundell, C., Cornebise, J., Kavukcuoglu, K., and Wierstra, D.
\newblock Weight uncertainty in neural networks.
\newblock \emph{arXiv preprint arXiv:1505.05424}, 2015.

\bibitem[Challis \& Barber(2011)Challis and Barber]{challis2011concave}
Challis, E. and Barber, D.
\newblock Concave gaussian variational approximations for inference in
  large-scale bayesian linear models.
\newblock In \emph{Proceedings of the Fourteenth International Conference on
  Artificial Intelligence and Statistics}, pp.\  199--207, 2011.

\bibitem[Chollet et~al.(2015)]{chollet2015keras}
Chollet, F. et~al.
\newblock Keras, 2015.

\bibitem[Dillon et~al.(2017)Dillon, Langmore, Tran, Brevdo, Vasudevan, Moore,
  Patton, Alemi, Hoffman, and Saurous]{dillon2017tensorflow}
Dillon, J.~V., Langmore, I., Tran, D., Brevdo, E., Vasudevan, S., Moore, D.,
  Patton, B., Alemi, A., Hoffman, M., and Saurous, R.~A.
\newblock Tensorflow distributions.
\newblock \emph{arXiv preprint arXiv:1711.10604}, 2017.

\bibitem[Farquhar et~al.(2020{\natexlab{a}})Farquhar, Osborne, and
  Gal]{farquhar_radial_2020}
Farquhar, S., Osborne, M., and Gal, Y.
\newblock {Radial} {Bayesian} neural networks: Beyond discrete support in
  large-scale bayesian deep learning.
\newblock \emph{Proceedings of the 23rd International Conference on Artificial
  Intelligence and Statistics}, 2020{\natexlab{a}}.

\bibitem[Farquhar et~al.(2020{\natexlab{b}})Farquhar, Smith, and
  Gal]{farquhar_try_2019}
Farquhar, S., Smith, L., and Gal, Y.
\newblock {Try} {Depth} instead of weight correlations: Mean-field is a less
  restrictive assumption for variational inference in deep networks.
\newblock \emph{Bayesian Deep Learning Workshop at NeurIPS},
  2020{\natexlab{b}}.

\bibitem[Giordano et~al.(2018)Giordano, Broderick, and
  Jordan]{giordano2018covariances}
Giordano, R., Broderick, T., and Jordan, M.~I.
\newblock Covariances, robustness and variational bayes.
\newblock \emph{The Journal of Machine Learning Research}, 19\penalty0
  (1):\penalty0 1981--2029, 2018.

\bibitem[Glorot \& Bengio(2010)Glorot and Bengio]{glorot2010understanding}
Glorot, X. and Bengio, Y.
\newblock Understanding the difficulty of training deep feedforward neural
  networks.
\newblock In \emph{Proceedings of the thirteenth international conference on
  artificial intelligence and statistics}, pp.\  249--256, 2010.

\bibitem[Glorot et~al.(2011)Glorot, Bordes, and Bengio]{glorot2011deep}
Glorot, X., Bordes, A., and Bengio, Y.
\newblock Deep sparse rectifier neural networks.
\newblock In \emph{Proceedings of the fourteenth international conference on
  artificial intelligence and statistics}, pp.\  315--323, 2011.

\bibitem[Goodfellow et~al.(2016)Goodfellow, Bengio, and
  Courville]{goodfellow2016deep}
Goodfellow, I., Bengio, Y., and Courville, A.
\newblock \emph{Deep learning}.
\newblock MIT press, 2016.

\bibitem[Graves(2011)]{graves2011practicalvb}
Graves, A.
\newblock Practical variational inference for neural networks.
\newblock In \emph{Advances in neural information processing systems}, pp.\
  2348--2356, 2011.

\bibitem[Gupta \& Nagar(2018)Gupta and Nagar]{gupta2018matrix}
Gupta, A.~K. and Nagar, D.~K.
\newblock \emph{Matrix variate distributions}.
\newblock Chapman and Hall/CRC, 2018.

\bibitem[He et~al.(2015)He, Zhang, Ren, and Sun]{he2015delving}
He, K., Zhang, X., Ren, S., and Sun, J.
\newblock Delving deep into rectifiers: Surpassing human-level performance on
  imagenet classification.
\newblock In \emph{Proceedings of the IEEE international conference on computer
  vision}, pp.\  1026--1034, 2015.

\bibitem[He et~al.(2016{\natexlab{a}})He, Zhang, Ren, and Sun]{he2016deep}
He, K., Zhang, X., Ren, S., and Sun, J.
\newblock Deep residual learning for image recognition.
\newblock In \emph{Proceedings of the IEEE conference on computer vision and
  pattern recognition}, pp.\  770--778, 2016{\natexlab{a}}.

\bibitem[He et~al.(2016{\natexlab{b}})He, Zhang, Ren, and Sun]{he2016resnet}
He, K., Zhang, X., Ren, S., and Sun, J.
\newblock Deep residual learning for image recognition.
\newblock In \emph{Proceedings of the IEEE conference on computer vision and
  pattern recognition}, pp.\  770--778, 2016{\natexlab{b}}.

\bibitem[Hern{\'a}ndez-Lobato \& Adams(2015)Hern{\'a}ndez-Lobato and
  Adams]{hernandez2015probabilisticbp}
Hern{\'a}ndez-Lobato, J.~M. and Adams, R.
\newblock Probabilistic backpropagation for scalable learning of bayesian
  neural networks.
\newblock In \emph{International Conference on Machine Learning}, pp.\
  1861--1869, 2015.

\bibitem[Hinton \& Van~Camp(1993)Hinton and Van~Camp]{hinton1993keeping}
Hinton, G. and Van~Camp, D.
\newblock Keeping neural networks simple by minimizing the description length
  of the weights.
\newblock In \emph{in Proc. of the 6th Ann. ACM Conf. on Computational Learning
  Theory}. Citeseer, 1993.

\bibitem[Hochreiter \& Schmidhuber(1997)Hochreiter and
  Schmidhuber]{hochreiter1997long}
Hochreiter, S. and Schmidhuber, J.
\newblock Long short-term memory.
\newblock \emph{Neural computation}, 9\penalty0 (8):\penalty0 1735--1780, 1997.

\bibitem[Jaakkola \& Jordan(1997)Jaakkola and Jordan]{jaakkola1997variational}
Jaakkola, T. and Jordan, M.
\newblock A variational approach to bayesian logistic regression models and
  their extensions.
\newblock In \emph{Sixth International Workshop on Artificial Intelligence and
  Statistics}, volume~82, pp.\ ~4, 1997.

\bibitem[Karaletsos et~al.(2018)Karaletsos, Dayan, and
  Ghahramani]{karaletsos2018probabilistic}
Karaletsos, T., Dayan, P., and Ghahramani, Z.
\newblock Probabilistic meta-representations of neural networks.
\newblock \emph{arXiv preprint arXiv:1810.00555}, 2018.

\bibitem[Khan et~al.(2018)Khan, Nielsen, Tangkaratt, Lin, Gal, and
  Srivastava]{khan2018fast}
Khan, M.~E., Nielsen, D., Tangkaratt, V., Lin, W., Gal, Y., and Srivastava, A.
\newblock Fast and scalable bayesian deep learning by weight-perturbation in
  adam.
\newblock \emph{arXiv preprint arXiv:1806.04854}, 2018.

\bibitem[Kingma \& Ba(2014)Kingma and Ba]{kingma2014adam}
Kingma, D.~P. and Ba, J.
\newblock Adam: A method for stochastic optimization.
\newblock \emph{arXiv preprint arXiv:1412.6980}, 2014.

\bibitem[Kingma \& Welling(2013)Kingma and Welling]{kingma2013auto}
Kingma, D.~P. and Welling, M.
\newblock Auto-encoding variational bayes.
\newblock \emph{arXiv preprint arXiv:1312.6114}, 2013.

\bibitem[Kingma et~al.(2015)Kingma, Salimans, and
  Welling]{kingma2015variationaldropout}
Kingma, D.~P., Salimans, T., and Welling, M.
\newblock Variational dropout and the local reparameterization trick.
\newblock In \emph{Advances in Neural Information Processing Systems}, pp.\
  2575--2583, 2015.

\bibitem[Krizhevsky et~al.(2009{\natexlab{a}})Krizhevsky, Hinton,
  et~al.]{krizhevsky2009cifar}
Krizhevsky, A., Hinton, G., et~al.
\newblock Learning multiple layers of features from tiny images.
\newblock Technical report, Citeseer, 2009{\natexlab{a}}.

\bibitem[Krizhevsky et~al.(2009{\natexlab{b}})Krizhevsky, Hinton,
  et~al.]{krizhevsky2009learning}
Krizhevsky, A., Hinton, G., et~al.
\newblock Learning multiple layers of features from tiny images.
\newblock Technical report, Citeseer, 2009{\natexlab{b}}.

\bibitem[LeCun \& Cortes(2010)LeCun and
  Cortes]{lecun-mnisthandwrittendigit-2010}
LeCun, Y. and Cortes, C.
\newblock {MNIST} handwritten digit database.
\newblock 2010.
\newblock URL \url{http://yann.lecun.com/exdb/mnist/}.

\bibitem[LeCun et~al.(1998)LeCun, Bottou, Bengio, Haffner,
  et~al.]{lecun1998gradient}
LeCun, Y., Bottou, L., Bengio, Y., Haffner, P., et~al.
\newblock Gradient-based learning applied to document recognition.
\newblock \emph{Proceedings of the IEEE}, 86\penalty0 (11):\penalty0
  2278--2324, 1998.

\bibitem[Louizos \& Welling(2016)Louizos and Welling]{louizos2016structured}
Louizos, C. and Welling, M.
\newblock Structured and efficient variational deep learning with matrix
  gaussian posteriors.
\newblock In \emph{International Conference on Machine Learning}, pp.\
  1708--1716, 2016.

\bibitem[Louizos \& Welling(2017)Louizos and
  Welling]{louizos2017multiplicative}
Louizos, C. and Welling, M.
\newblock Multiplicative normalizing flows for variational bayesian neural
  networks.
\newblock In \emph{Proceedings of the 34th International Conference on Machine
  Learning-Volume 70}, pp.\  2218--2227. JMLR. org, 2017.

\bibitem[Maas et~al.(2011)Maas, Daly, Pham, Huang, Ng, and
  Potts]{maas2011learning}
Maas, A.~L., Daly, R.~E., Pham, P.~T., Huang, D., Ng, A.~Y., and Potts, C.
\newblock Learning word vectors for sentiment analysis.
\newblock In \emph{Proceedings of the 49th annual meeting of the association
  for computational linguistics: Human language technologies-volume 1}, pp.\
  142--150. Association for Computational Linguistics, 2011.

\bibitem[MacKay(1992)]{mackay1992practical}
MacKay, D.~J.
\newblock A practical bayesian framework for backpropagation networks.
\newblock \emph{Neural computation}, 4\penalty0 (3):\penalty0 448--472, 1992.

\bibitem[Marlin et~al.(2011)Marlin, Khan, and Murphy]{marlin2011piecewise}
Marlin, B.~M., Khan, M.~E., and Murphy, K.~P.
\newblock Piecewise bounds for estimating bernoulli-logistic latent gaussian
  models.
\newblock In \emph{Proceedings of the International Conference on Machine
  Learning}, pp.\  633--640, 2011.

\bibitem[Miller et~al.(2017)Miller, Foti, and Adams]{miller2017variational}
Miller, A.~C., Foti, N.~J., and Adams, R.~P.
\newblock Variational boosting: Iteratively refining posterior approximations.
\newblock In \emph{Proceedings of the 34th International Conference on Machine
  Learning}, pp.\  2420--2429. JMLR. org, 2017.

\bibitem[Mishkin et~al.(2018)Mishkin, Kunstner, Nielsen, Schmidt, and
  Khan]{mishkin2018slang}
Mishkin, A., Kunstner, F., Nielsen, D., Schmidt, M., and Khan, M.~E.
\newblock Slang: Fast structured covariance approximations for bayesian deep
  learning with natural gradient.
\newblock In \emph{Advances in Neural Information Processing Systems}, pp.\
  6245--6255, 2018.

\bibitem[Neal(1993)]{neal1993bayesian}
Neal, R.~M.
\newblock Bayesian learning via stochastic dynamics.
\newblock In \emph{Advances in neural information processing systems}, pp.\
  475--482, 1993.

\bibitem[Ong et~al.(2018)Ong, Nott, and Smith]{ong2018gaussian}
Ong, V. M.-H., Nott, D.~J., and Smith, M.~S.
\newblock Gaussian variational approximation with a factor covariance
  structure.
\newblock \emph{Journal of Computational and Graphical Statistics}, 27\penalty0
  (3):\penalty0 465--478, 2018.

\bibitem[Osawa et~al.(2019)Osawa, Swaroop, Jain, Eschenhagen, Turner, Yokota,
  and Khan]{osawa2019practicaldeeplearning}
Osawa, K., Swaroop, S., Jain, A., Eschenhagen, R., Turner, R.~E., Yokota, R.,
  and Khan, M.~E.
\newblock Practical deep learning with bayesian principles.
\newblock \emph{arXiv preprint arXiv:1906.02506}, 2019.

\bibitem[Ovadia et~al.(2019)Ovadia, Fertig, Ren, Nado, Sculley, Nowozin,
  Dillon, Lakshminarayanan, and Snoek]{ovadia2019can}
Ovadia, Y., Fertig, E., Ren, J., Nado, Z., Sculley, D., Nowozin, S., Dillon,
  J.~V., Lakshminarayanan, B., and Snoek, J.
\newblock Can you trust your model's uncertainty? evaluating predictive
  uncertainty under dataset shift.
\newblock \emph{arXiv preprint arXiv:1906.02530}, 2019.

\bibitem[Peterson(1987)]{peterson1987mean}
Peterson, C.
\newblock A mean field theory learning algorithm for neural networks.
\newblock \emph{Complex systems}, 1:\penalty0 995--1019, 1987.

\bibitem[Raftery et~al.(2005)Raftery, Gneiting, Balabdaoui, and
  Polakowski]{raftery2005using}
Raftery, A.~E., Gneiting, T., Balabdaoui, F., and Polakowski, M.
\newblock Using bayesian model averaging to calibrate forecast ensembles.
\newblock \emph{Monthly weather review}, 133\penalty0 (5):\penalty0 1155--1174,
  2005.

\bibitem[Ranganath et~al.(2014)Ranganath, Gerrish, and
  Blei]{ranganath2014black}
Ranganath, R., Gerrish, S., and Blei, D.
\newblock Black box variational inference.
\newblock In \emph{Artificial Intelligence and Statistics}, pp.\  814--822,
  2014.

\bibitem[Ranganath et~al.(2016)Ranganath, Tran, and
  Blei]{ranganath2016hierarchical}
Ranganath, R., Tran, D., and Blei, D.
\newblock Hierarchical variational models.
\newblock In \emph{International Conference on Machine Learning}, pp.\
  324--333, 2016.

\bibitem[Rezende \& Mohamed(2015)Rezende and Mohamed]{rezende2015variational}
Rezende, D. and Mohamed, S.
\newblock Variational inference with normalizing flows.
\newblock In \emph{International Conference on Machine Learning}, pp.\
  1530--1538, 2015.

\bibitem[Salimans et~al.(2013)Salimans, Knowles, et~al.]{salimans2013fixed}
Salimans, T., Knowles, D.~A., et~al.
\newblock Fixed-form variational posterior approximation through stochastic
  linear regression.
\newblock \emph{Bayesian Analysis}, 8\penalty0 (4):\penalty0 837--882, 2013.

\bibitem[Seeger(2000)]{seeger2000bayesian}
Seeger, M.
\newblock Bayesian model selection for support vector machines, gaussian
  processes and other kernel classifiers.
\newblock In \emph{Advances in neural information processing systems}, pp.\
  603--609, 2000.

\bibitem[S{\o}nderby et~al.(2016)S{\o}nderby, Raiko, Maal{\o}e, S{\o}nderby,
  and Winther]{sonderby2016train}
S{\o}nderby, C.~K., Raiko, T., Maal{\o}e, L., S{\o}nderby, S.~K., and Winther,
  O.
\newblock How to train deep variational autoencoders and probabilistic ladder
  networks.
\newblock In \emph{33rd International Conference on Machine Learning (ICML
  2016)}, 2016.

\bibitem[Sun et~al.(2017)Sun, Chen, and Carin]{sun2017learning}
Sun, S., Chen, C., and Carin, L.
\newblock Learning structured weight uncertainty in bayesian neural networks.
\newblock In \emph{Artificial Intelligence and Statistics}, pp.\  1283--1292,
  2017.

\bibitem[Tan \& Nott(2018)Tan and Nott]{tan2018gaussian}
Tan, L.~S. and Nott, D.~J.
\newblock Gaussian variational approximation with sparse precision matrices.
\newblock \emph{Statistics and Computing}, 28\penalty0 (2):\penalty0 259--275,
  2018.

\bibitem[Titsias \& L{\'a}zaro-Gredilla(2014)Titsias and
  L{\'a}zaro-Gredilla]{titsias2014doubly}
Titsias, M. and L{\'a}zaro-Gredilla, M.
\newblock Doubly stochastic variational bayes for non-conjugate inference.
\newblock In \emph{International conference on machine learning}, pp.\
  1971--1979, 2014.

\bibitem[Tran et~al.(2019)Tran, Dusenberry, Hafner, and van~der
  Wilk]{tran2019bayesian}
Tran, D., Dusenberry, M.~W., Hafner, D., and van~der Wilk, M.
\newblock Bayesian {L}ayers: A module for neural network uncertainty.
\newblock In \emph{Neural Information Processing Systems}, 2019.

\bibitem[Turner \& Sahani(2011)Turner and Sahani]{turnersahani2011}
Turner, R. and Sahani, M.
\newblock \emph{Two problems with variational expectation maximisation for
  time-series models}, pp.\  109--130.
\newblock Cambridge University Press, 2011.

\bibitem[Wen et~al.(2018)Wen, Vicol, Ba, Tran, and Grosse]{wen2018flipout}
Wen, Y., Vicol, P., Ba, J., Tran, D., and Grosse, R.
\newblock Flipout: Efficient pseudo-independent weight perturbations on
  mini-batches.
\newblock \emph{arXiv preprint arXiv:1803.04386}, 2018.

\bibitem[Wenzel et~al.(2020)Wenzel, Roth, Veeling, {\'S}wi{\k{a}}tkowski, Tran,
  Mandt, Snoek, Salimans, Jenatton, and Nowozin]{wenzel2020good}
Wenzel, F., Roth, K., Veeling, B.~S., {\'S}wi{\k{a}}tkowski, J., Tran, L.,
  Mandt, S., Snoek, J., Salimans, T., Jenatton, R., and Nowozin, S.
\newblock How good is the bayes posterior in deep neural networks really?
\newblock \emph{arXiv preprint arXiv:2002.02405}, 2020.

\bibitem[Wu et~al.(2019)Wu, Nowozin, Meeds, Turner, Hern{\'a}ndez-Lobato, and
  Gaunt]{wu2019dvi}
Wu, A., Nowozin, S., Meeds, E., Turner, R.~E., Hern{\'a}ndez-Lobato, J.~M., and
  Gaunt, A.~L.
\newblock Deterministic variational inference for robust bayesian neural
  networks.
\newblock In \emph{International Conference on Learning Representations (ICLR
  2019)}, 2019.

\bibitem[Zhang et~al.(2018)Zhang, Butepage, Kjellstrom, and
  Mandt]{zhang2018advances}
Zhang, C., Butepage, J., Kjellstrom, H., and Mandt, S.
\newblock Advances in variational inference.
\newblock \emph{IEEE transactions on pattern analysis and machine
  intelligence}, 2018.

\bibitem[Zhang et~al.(2017)Zhang, Sun, Duvenaud, and Grosse]{zhang2017noisy}
Zhang, G., Sun, S., Duvenaud, D., and Grosse, R.
\newblock Noisy natural gradient as variational inference.
\newblock \emph{arXiv preprint arXiv:1712.02390}, 2017.

\end{thebibliography}
\bibliographystyle{icml2020}

\clearpage
\appendix

\section{Proof of the Matrix Variate Normal Parameterization}\label{appendix:proof}

In this section of the appendix, we formally explain the connections between the $k$-tied Normal distribution and the matrix variate Gaussian distribution \citep{gupta2018matrix}, referred to as $\mathcal{MN}$.

Consider positive definite matrices $\Qb \in \Real^{r \times r}$ and $\Pb \in \Real^{c \times c}$ and some arbitrary matrix $\Mb \in \Real^{r \times c}$. We have by definition that $\bW \in \Real^{r \times c} \sim \mathcal{MN}(\Mb, \Qb, \Pb)$ if and only if $\text{vec}(\bW) \sim \Ncal(\text{vec}(\Mb), \Pb \otimes \Qb)$, where $\text{vec}(\cdot)$ stacks the columns of a matrix and $\otimes$ is the Kronecker product

The $\mathcal{MN}$ has already been used for variational inference by~\citet{louizos2016structured} and \citet{sun2017learning}. In particular, \citet{louizos2016structured} consider the case where both $\Pb$ and $\Qb$ are restricted to be diagonal matrices. In that case, the resulting distribution corresponds to our $k$-tied Normal distribution with $k=1$ since
$$
\Pb \otimes \Qb = \textrm{diag}(\pb) \otimes \textrm{diag}(\qb)= \textrm{diag}(\textrm{vec}(\qb \pb^\top)).
$$
Importantly, we prove below that, in the case where $k \geq 2$, the $k$-tied Normal distribution cannot be represented as a matrix variate Gaussian distribution.

\begin{lemma}[Rank-2 matrix and Kronecker product] Let $\Bb$ be a rank-$2$ matrix in $\Real_{+}^{r \times c}$. There do not exist matrices $\Qb \in \Real^{r \times r}$ and $\Pb \in \Real^{c \times c}$ such that 
$$
\textrm{diag}(\textrm{vec}(\Bb)) = \Pb \otimes \Qb.
$$
\end{lemma}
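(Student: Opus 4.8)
The plan is to argue by contradiction, exploiting the very rigid structure that a Kronecker product $\Pb \otimes \Qb$ must have. First I would suppose such $\Pb \in \Real^{c \times c}$ and $\Qb \in \Real^{r \times r}$ exist. The left-hand side $\textrm{diag}(\textrm{vec}(\Bb))$ is a diagonal matrix with strictly positive entries (since $\Bb \in \Real_{+}^{r \times c}$). The entry of $\Pb \otimes \Qb$ in block position $(j,j')$ and within-block position $(i,i')$ is $P_{jj'} Q_{ii'}$, and this equals $B_{ij}$ when $i=i'$, $j=j'$ and must vanish otherwise. From the vanishing of all off-diagonal entries I would first deduce that $\Pb$ and $\Qb$ are themselves diagonal: if $Q_{ii'} \neq 0$ for some $i \neq i'$, then $P_{jj} Q_{ii'}$ sits in an off-diagonal slot and must be zero, forcing $P_{jj} = 0$ for all $j$, but then the on-diagonal entries $P_{jj} Q_{ii} = B_{ij}$ would all be zero, contradicting positivity; symmetrically for $\Pb$.

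So write $\Pb = \textrm{diag}(\pb)$ and $\Qb = \textrm{diag}(\qb)$ with $p_j, q_i$ all nonzero (again by positivity of the $B_{ij}$). Then the on-diagonal condition becomes exactly $B_{ij} = q_i p_j$ for all $i,j$, i.e. $\Bb = \qb \pb^{\top}$. But this is a rank-$1$ matrix (it is an outer product of two vectors), contradicting the hypothesis that $\Bb$ has rank $2$. This contradiction completes the proof.

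The main obstacle — really the only subtle point — is the step that forces $\Pb$ and $\Qb$ to be diagonal; one has to be slightly careful because a priori the Kronecker factorization is not unique (it is only determined up to a reciprocal scalar), so I cannot simply "read off" $\Pb$ and $\Qb$. The clean way around this is to avoid any appeal to uniqueness and instead argue directly from the entrywise equations $P_{jj'}Q_{ii'} = B_{ij}\,[i=i'][j=j']$ as above, using only that every $B_{ij}$ is strictly positive. Everything after that is immediate. I would also remark in passing that the argument shows more: any nonnegative $\Bb$ with $\textrm{diag}(\textrm{vec}(\Bb))$ expressible as a Kronecker product must in fact be entrywise positive and rank exactly $1$, so the lemma extends verbatim to any rank $k \geq 2$, which is the case actually needed for the $k$-tied Normal family.
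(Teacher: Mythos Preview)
Your proof is correct and follows essentially the same route as the paper's: argue by contradiction, use the entrywise/block structure of $\Pb \otimes \Qb$ together with strict positivity of the $B_{ij}$ to force both $\Pb$ and $\Qb$ to be diagonal, and then read off $\Bb = \qb\pb^\top$, contradicting rank $2$. Your handling of the diagonality step via the contrapositive is a slightly different presentation of the same argument the paper gives by inspecting diagonal and off-diagonal blocks directly, and your closing remark that the lemma extends to any rank $k \geq 2$ is a useful addition.
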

\begin{proof}
Let us introduce the shorthand $\Db = \textrm{diag}(\textrm{vec}(\Bb))$. By construction, $\Db$ is diagonal and has its diagonal terms strictly positive (it is assumed that $\Bb \in \Real_{+}^{r \times c}$, i.e., $b_{ij} > 0$ for all $i, j$).

We proceed by contradiction. Assume there exist $\Qb \in \Real^{r \times r}$ and $\Pb \in \Real^{c \times c}$ such that 
$\Db = \Pb \otimes \Qb$. 

This implies that all diagonal blocks of $\Pb \otimes \Qb$ are themselves diagonal with strictly positive diagonal terms. Thus, $p_{jj} \Qb$ is diagonal for all $j \in \{1,\dots,c\}$, which implies in turn that $\Qb$ is diagonal, with non-zero diagonal terms and $p_{jj} \neq 0$. 
Moreover, since the off-diagonal blocks $p_{ij} \Qb$ for $i \neq j$ must be zero and $\Qb \neq \zerob$, we have $p_{ij}=0$ and $\Pb$ is also diagonal.

To summarize, if there exist $\Qb \in \Real^{r \times r}$ and $\Pb \in \Real^{c \times c}$ such that 
$\Db = \Pb \otimes \Qb$, then it holds that $\Db = \textrm{diag}(\pb) \otimes \textrm{diag}(\qb)$ with $\pb \in \Real^c$ and $\qb \in \Real^r$. This last equality can be rewritten as $b_{ij} = p_j q_i$ for all $i \in \{1,\dots,r\}$ and $j \in \{1,\dots,c\}$, or equivalently
$$
\Bb = \qb \pb^\top.
$$
This leads to a contradiction since $\qb \pb^\top$ has rank one while $\Bb$ is assumed to have rank two.
\end{proof}

Figure~\ref{fig:ktied_normal_vs_mvn} provides an illustration of the difference between the $k$-tied Normal and the $\mMN$ distribution.
\begin{figure}[ht]
\begin{minipage}[b]{1.0\linewidth}%
\centering%
\includegraphics[width=68mm]{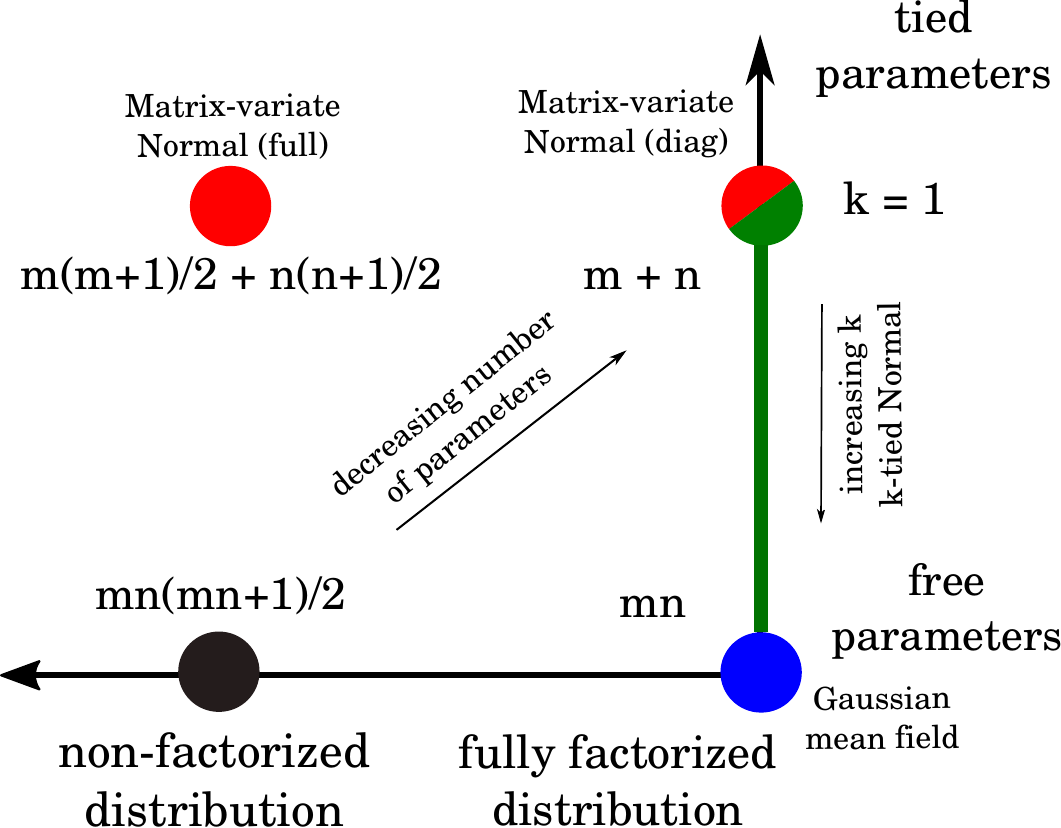}%
\end{minipage}%
\caption{Illustration of the difference in modeling of the posterior covariance by the $k$-tied Normal distribution (green), the $\mMN$ distribution (red), the Gaussian mean field (blue) and the dense Gaussian covariance (black) for a layer of size $m \times n$. The $k$-tied Normal with $k = 1$ is equivalent to $\mMN$ with diagonal row and column covariance matrices (half-red, half-green circle). Our experiments show that the $k=1$ fails to capture the performance of the mean field. On the other hand, while the full/non-diagonal $\mMN$ increases the expressiveness of the posterior, it also increases the number of parameters. In contrast, the $k$-tied Normal distribution with $k \geq 2$ not only decreases the number of parameters, but also matches the predictive performance of the mean field.}%
\label{fig:ktied_normal_vs_mvn}%
\end{figure}

\section{He-scaled Normal Prior}\label{apx:heprior}
We investigate whether the low-rank structure is specific to the GMFVI neural networks that use a Normal prior with a single scalar scale for all the weights.
Instead of using the single scale parameter, we analyse a setting in which the Normal prior scale is set according to the scaling rules devised for neural network weights initialization \cite{glorot2010understanding,he2015delving}.
According to these rules, a per layer scale parameter is set according to the layer shape and activation function used.
In particular, we use the scaling rule from \citet{he2015delving} for the models with ReLU activations~\cite{glorot2011deep}:
\begin{equation}
p(\bw_l) = \mathcal{N}\left(0, \frac{2}{m_l}\right),
\end{equation}
where $m_l$ is the \textit{fan-in} of the $m$'th layer.\footnote{For a dense layer the fan-in is the number of input dimensions, for a 2D Convolutional layer with a kernel of size $k\times k$ and $d$ input
channels the fan-in is $m_l = k^2d$.}
However, the scaling rule proposed in \citet{he2015delving} does not cover the bias terms, which are initialized at zero.
Therefore, for the ResNet-18 on CIFAR-10 which we take under test, we keep the prior for the biases unchanged at $\mathcal{N}(0,I)$.
We rerun then the low-rank structure experiments from Section~\ref{sec:post_training_low_rank} Figure~\ref{fig:lowrank_approx_resnet_short},
but now with the He-scaled prior.
Figure~\ref{fig:lowrank_approx_resnet_he} shows the low-rank structure analysis results for the new prior.
While we observe an overall drop in performance, the low-rank structure clearly remains present.

\begin{figure}[ht]
\vspace{-0.1cm}%
\begin{minipage}[b]{1.0\linewidth}
\begin{minipage}[b]{1.0\linewidth}
\centering
\includegraphics[width=75mm]{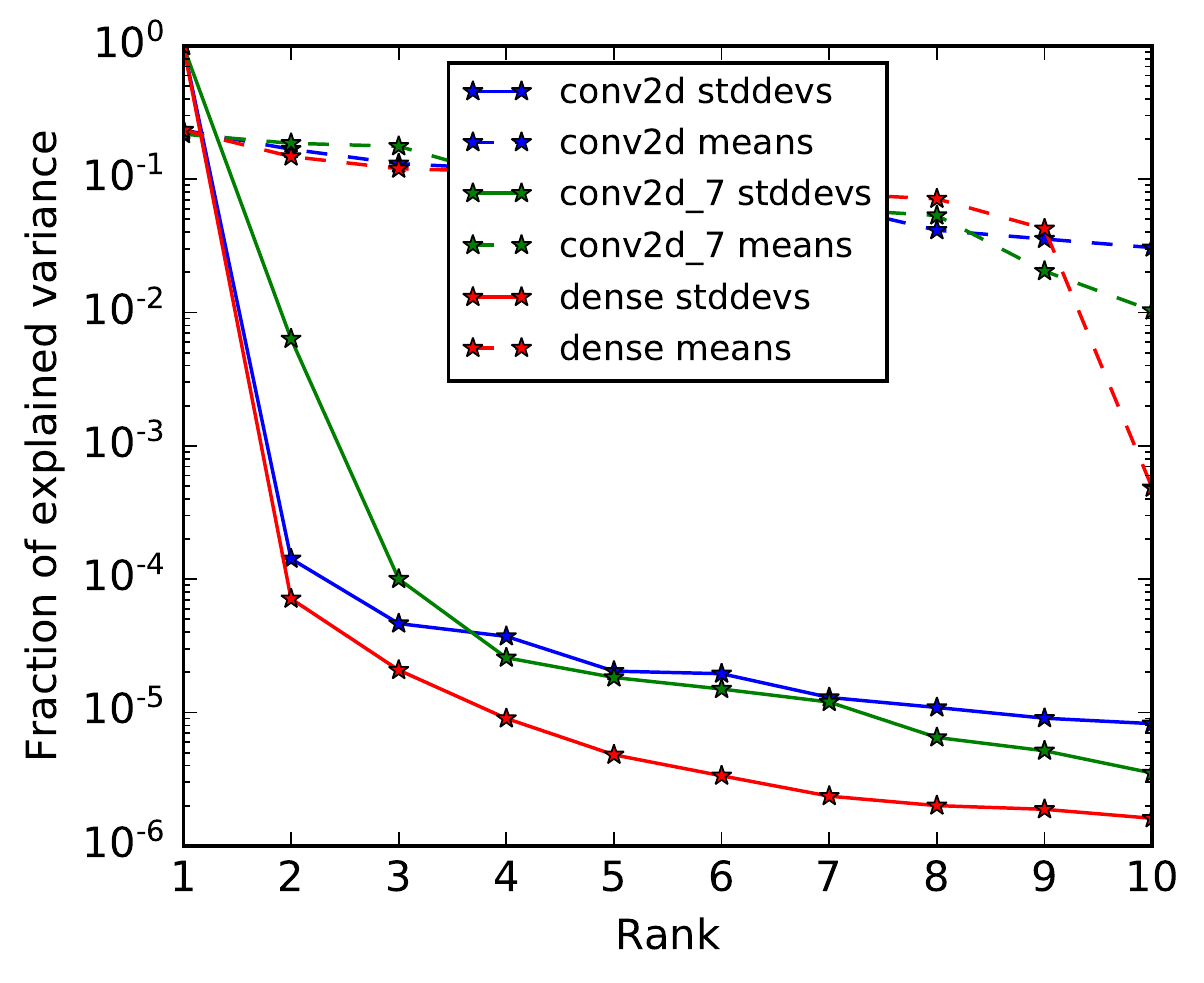}
\end{minipage} \\
\begin{minipage}[b]{0.96\linewidth}
\centering
\resizebox{1.00\textwidth}{!}{
\centering
\begin{tabular}{lccccc}
\hline
Method & -ELBO  $\,\downarrow$ & NLL $\,\downarrow$ & Accuracy $\,\uparrow$  \\ \hline \hline
Mean-field & 1.379\spm{0.0096}  & 0.6384\spm{0.0096} & 79.0\spm{0.41} \\ \hline
1-tied    & 5.428\spm{0.018} & 1.485\spm{0.0056} & 57.0\spm{0.50}  \\
2-tied    & 1.448\spm{0.0097} & 0.648\spm{0.0079} & 78.8\spm{0.41}  \\
3-tied    & 1.411\spm{0.0097} & 0.646\spm{0.0079} & 78.9\spm{0.41}  \\ \hline
\end{tabular}%
}%
\end{minipage}%
\caption{The post-training low rank structure is still present in the posterior standard deviation parameters of the
ELBO-converged standard GMFVI ResNet-18 CIFAR-10 model when using the He-scaled prior.
Approximations to these parameters with ranks higher than 1 result in performance close to that when not using the approximation.
We report mean and SEM for predictions made using an ensemble of 100 weights samples. The SEM is measured across the test examples.}
\label{fig:lowrank_approx_resnet_he}%
\end{minipage}
\vspace{-0.5cm}%
\end{figure}

\section{KL Annealing with Adam}\label{apx:kladam}

We verify that when using KL annealing with Adam the posterior standard deviation parameters do not converge prematurely,
but rather continue being optimized after the KL is at its full contribution.
Figure~\ref{fig:kl_anneal_check} illustrates this on the example of the ResNet-18 CIFAR-10 model trained the standard GMFVI.
Furthermore, for the MLP, CNN and LSTM models, we observed their posterior standard deviations at convergence to have
large values compared to the prior standard deviation value (>50\% of the prior value),
showing that we are modeling substantial uncertainty.

\begin{figure}[ht]
\vspace{-0.1cm}%
\begin{minipage}[b]{1.0\linewidth}
\begin{minipage}[b]{1.0\linewidth}
\centering
\includegraphics[width=75mm]{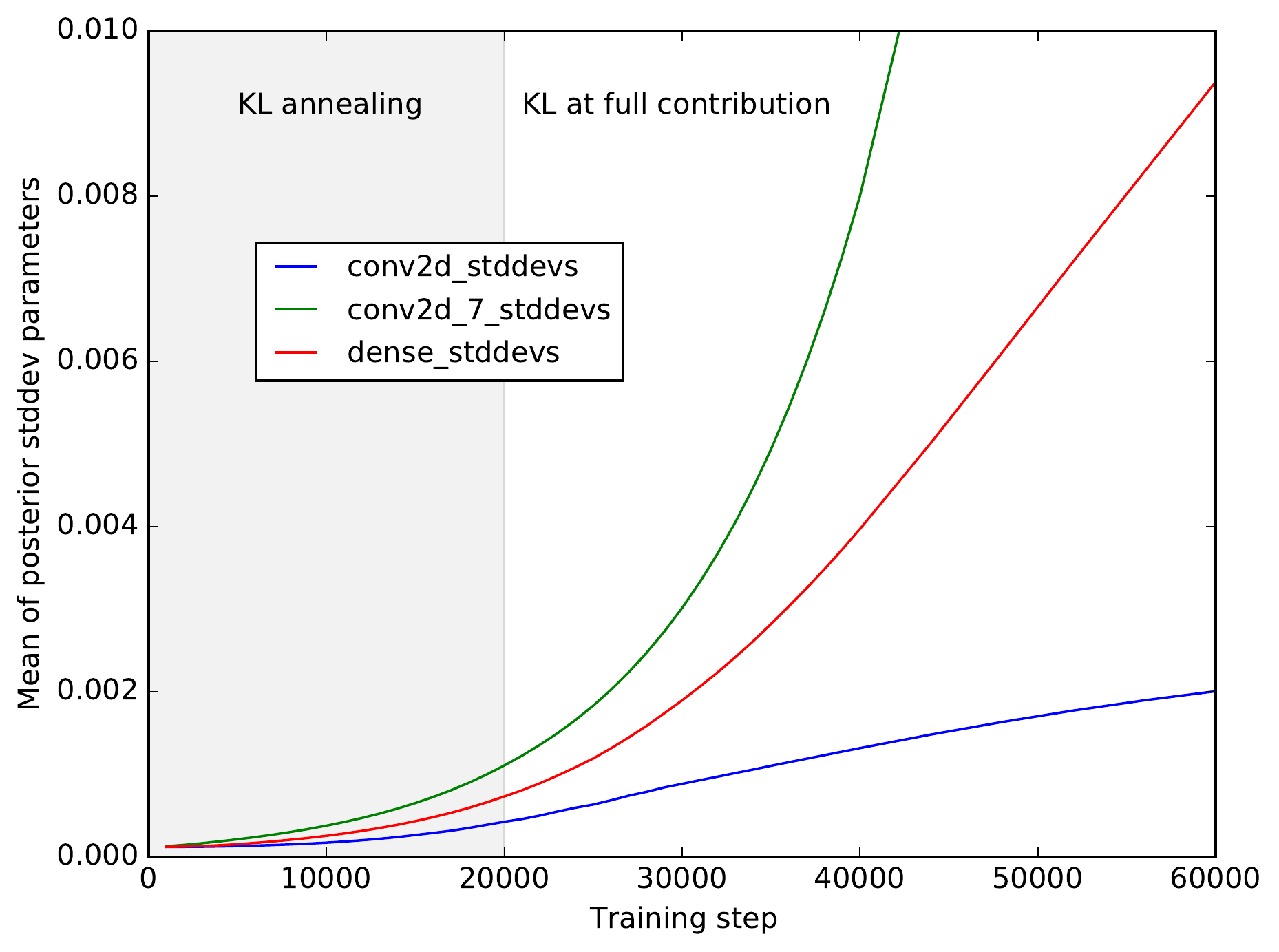}
\label{fig:kl_anneal_check}%
\end{minipage} \\
\begin{minipage}[b]{1.0\linewidth}
\centering
\includegraphics[width=75mm]{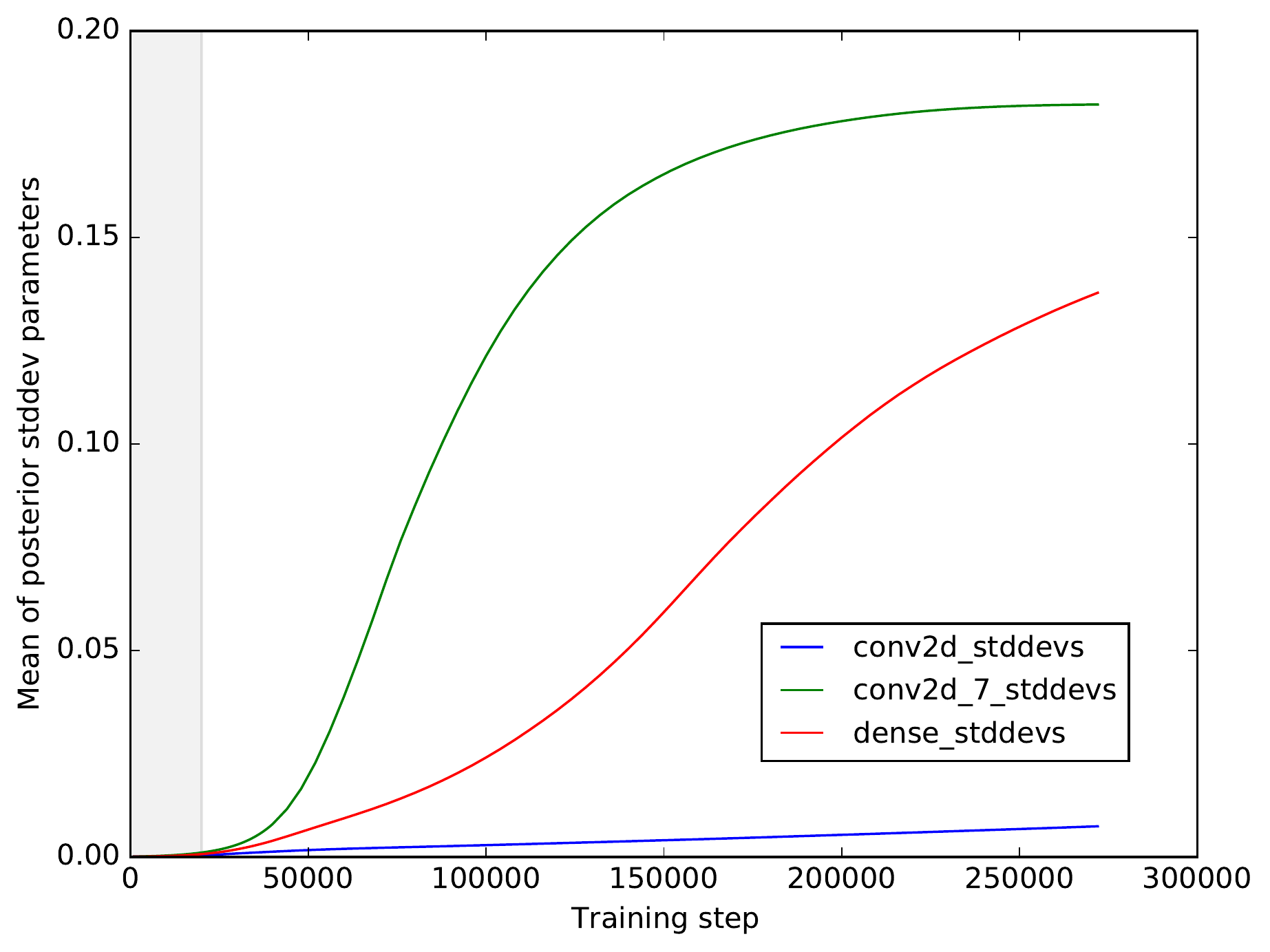}
\vspace{-0.5cm}%
\caption{Change in the mean of posterior standard deviation parameters for selected layers of the standard GMFVI ResNet-18 CIFAR-10 model over the course of training.
KL is annealed over the first 50 epochs linearly from 0 to 1 (gray area). Top: posterior standard deviation parameters continue being optimized when the KL is at its full contribution.
Bottom: the posterior standard deviations reach large values after 700 epochs showing that we are modeling substantial uncertainty.}
\label{fig:kl_anneal_check}%
\vspace{-0.5cm}%
\end{minipage}
\end{minipage}
\end{figure}

\section{Experimental Details}
\label{apx:exp_details}
In this section we provide additional information on the experimental setup used in the main paper. In particular, we describe the details of the models and datasets, the utilized standard Gaussian Mean Field Variational Inference (GMFVI) training procedure, the low-rank structure analysis of the GMFVI trained posteriors and the proposed $k$-tied Normal posterior training procedure.

\subsection{Models and datasets} 
To confirm the validity of our results, we performe the experiments on a range of models and datasets with different data types, architecture types and sizes. Below we describe their details.

\paragraph{MLP MNIST} Multilayer perceptron (MLP) model with three dense layers and ReLu activations trained on the MNIST dataset \citep{lecun-mnisthandwrittendigit-2010}. The three layers have sizes of 400, 400 and 10 hidden units. We preprocess the images to be have values in range $[-1, 1]$. We use the last 10,000 examples of the training set as a validation set. 

\paragraph{LeNet CNN CIFAR-100} LeNet convolutional neural network (CNN) model \citep{lecun1998gradient} with two convolutional layers followed by two dense layers, all interleaved with ReLu activations. The two convolutional layers have 32 and 64 output filters respectively, each produced by kernels of size $3\times3$. The two dense layers have sizes of 512 and 100 hidden units. We train this network on the CIFAR-100 dataset \citep{krizhevsky2009learning}. We preprocess the images to have values in range $[0, 1]$. We use the last 10,000 examples of the training set as a validation set. 

\paragraph{LSTM IMDB} Long short-term memory (LSTM) model \citep{hochreiter1997long} that consists of an embedding and an LSTM cell, followed by a dense layer with a single unit. The LSTM cell consists of two dense weight matrices, namely the kernel and the recurrent kernel. The embedding and the LSTM cell have both 128-dimensional output space. More precisely, we adopt the publicly available LSTM Keras~\citep{chollet2015keras} example\footnote{See:~\url{https://github.com/keras-team/keras/blob/master/examples/imdb_lstm.py}.}, except that we set the dropout rate to zero. We train this model on the IMDB text sentiment classification dataset \citep{maas2011learning}, in which we use the last 5,000 examples of the training set as a validation set. 

\paragraph{ResNet-18 CIFAR-10} ResNet-18 model \citep{he2016deep} trained on the CIFAR-10 dataset \citep{krizhevsky2009learning}. We adopt the ResNet-18 implementation\footnote{See:~\url{https://github.com/tensorflow/probability/blob/master/tensorflow_probability/examples/cifar10_bnn.py}.} 
from the Tensorflow Probability~\citep{dillon2017tensorflow} repository. We train/evaluate this model on the train/test split of 50,000 and 10,000 images, respectively, from the CIFAR-10 dataset available in Tensorflow Datasets\footnote{See:~\url{https://www.tensorflow.org/datasets/catalog/cifar10}.}.

\subsection{GMFVI training}
We train all the above models using GMFVI. We split the discussion of the details of the GMFVI training procedure into two parts. First, we describe the setup for the MLP, CNN and LSTM models, for which we prepare our own GMFVI implementations. Second, we explain the setup for the GMFVI training of the ResNet-18 model, for which we use the implementation available in the Tensorflow Probability repository as mentioned above. %

\paragraph{MLP, CNN and LSTM}

In the MLP and the CNN models, we approximate the posterior using GMFVI for all the weights (both kernel and bias weights). For the LSTM model, we approximate the posterior using GMFVI only for the kernel weights, while for the bias weights we use a point estimate. For all the three models, we use the standard reparametrization trick estimator~\citep{kingma2013auto}. We initialize the GMFVI posterior means using the standard He initialization \citep{he2015delving} and the GMFVI posterior standard deviations using samples from $\mathcal{N}(0.01, 0.001)$. Furthermore, we use a Normal prior $\mathcal{N}(0, \sigma_p I)$ with a single scalar standard deviation hyper-parameter $\sigma_p$ for all the layers. We select $\sigma_p$ for each of the models separately from a set of $\{0.2, 0.3\}$ based on the validation data set performance.

We optimize the variational parameters using an Adam optimizer \citep{kingma2014adam}. We pick the optimal learning rate for each model from the set of $\{0.0001, 0.0003, 0.001, 0.003\}$ also based on the validation data set performance. We choose the batch size of 1024 for the MLP and CNN models, and the batch size of 128 for the LSTM model. We train all the models until the ELBO convergence. %

To implement the MLP and CNN models we use the \texttt{tfp.layers} module from the Tensorflow Probability, while to implement the LSTM model we use the \texttt{LSTMCellReparameterization}\footnote{See:~\url{https://github.com/google/edward2/blob/master/edward2/tensorflow/layers/recurrent.py}.} class from the Edward2 Layers module~\cite{tran2019bayesian}.

\paragraph{ResNet-18}
The specific details of the GMFVI training of the ResNet-18 model can be found in the previously linked implementation from the Tensorflow Probability repository. Here, we describe the most important and distinctive aspects of this implementation.

The ResNet-18 model approximates the posterior using GMFVI only for the kernel weights, while for the bias weights it uses a point estimate. The model uses the Flipout estimator~\citep{wen2018flipout} and a constraint on the maximum value of the GMFVI posterior standard deviations of $0.2$. The GMFVI posterior means are initialized using samples from $N(0, 0.1)$, while the GMFVI posterior log standard deviations are initialized using samples from $\mathcal{N}(-9.0, 0.1)$. Furthermore, the model uses a Normal prior $\mathcal{N}(0, I)$ for all of its layers. 

The variational parameters are trained using the Adam optimizer with a learning rate of $0.0001$ and a batch size of 128. The model is trained for 700 epochs. The contribution of the $D_{KL}$ term in the negative Evidence Lower Bound (ELBO) equation %
is \textit{annealed} linearly from zero to its full contribution over the first 50 epochs~\citep{sonderby2016train}.

\subsection{Low-rank structure analysis}

After training the above models using GMFVI, we investigate the low-rank structure in their trained variational posteriors. For the MLP, CNN and LSTM models, we investigate the low-rank structure of their dense layers only. For the ResNet-18 model, we investigate both its dense and convolutional layers.

To investigate the low-rank structure in the GMFVI posterior of a dense layer, we inspect a spectrum of the posterior mean and standard deviation matrices. In particular, for both the posterior mean and standard deviation matrices, we consider the fraction of the variance explained by the top singular values from their SVD decomposition (see Figure \ref{fig:exp_var} in the main paper). Furthermore, we explore the impact on predictive performance of approximating the reshaped diagonal matrices with their low-rank approximations using only the components corresponding to the top singular values (see Table \ref{tab:lowrank_approx_v1} in the main paper). Note that such low-rank approximations may contain values below zero. This has to be addressed when approximating the matrices of the posterior standard deviations, which can contain only positive values. Therefore, we use a lower bound of zero for the values of the approximations to the posterior standard deviations.

To investigate the low-rank structure in a GMFVI posterior of a convolutional layer, we need to add a few more steps compared to those for a dense layer. In particular, weights of the convolutional layers considered here are 4-dimensional, instead of 2-dimensional as in the dense layer. Therefore, before performing the SVD decomposition, as for the dense layers, we first reshape the 4-dimensional weight tensor from the convolutional layer into a 2-dimensional weight matrix. More precisely, we flatten all dimensions of the weight tensor except for the last dimension (e.g., a weight tensor of shape $[3, 3, 512, 512]$ is reshaped to $[3 \cdot 3 \cdot 512, 512]$). Figure \ref{fig:heatmap_stddev_conv} contains example visualizations of the resulting flattened 2-dimensional matrices\footnote{After this specific reshape operation, all the weights corresponding to a single output filter are contained in a single column of the resulting weight matrix.}.
Given the 2-dimensional form of the weight tensor, we can investigate the low-rank structure in the convolutional layers as for the dense layers. As noted already in Figure \ref{fig:lowrank_approx_resnet_short} in the main paper, we observe the same strong low-rank structure behaviour in the flattened convolutional layers as in the dense layers. Interestingly, the low-rank structure is the most visible in the final convolutional layers, which also contain the highest number of parameters, see Figure \ref{fig:exp_var_conv}. 

Importantly, note that after performing the low-rank approximation in this 2-dimensional space, we can reshape the resulting 2-dimensional low-rank matrices back into the 4-dimensional form of a convolutional layer. Table \ref{tab:lowrank_approx_resnet} shows that such a low-rank approximation of the convolutional layers of the analyzed ResNet-18 model can be performed without a loss in the model's predictive performance, while significantly reducing the total number of model parameters.

\begin{figure*}[ht]
\begin{minipage}[b]{1.0\linewidth}%
\centering%
\includegraphics[width=138mm]{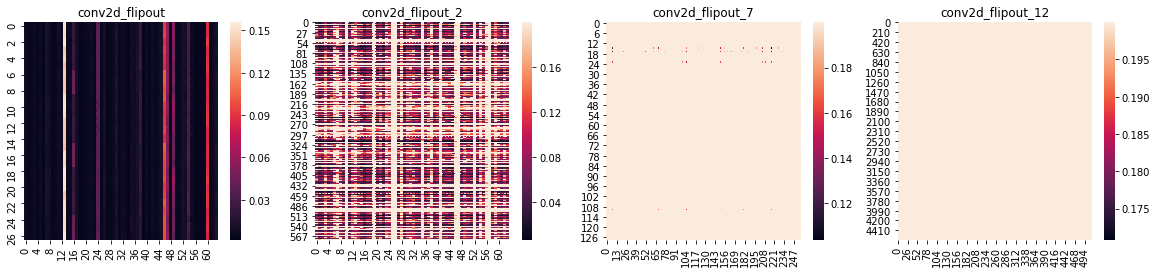}%
\end{minipage}%
\caption{Heat maps of the partially flattened posterior standard deviation tensors for the selected convolutional layers of the ResNet-18 GMFVI BNN trained on CIFAR-10. The partially flattened posterior standard deviation tensors of the convolutional layers display similar low-rank patterns that we observe for the dense layers.}%
\label{fig:heatmap_stddev_conv}%
\end{figure*}

\begin{figure*}[ht]
\begin{minipage}[b]{1.0\linewidth}
\centering
\includegraphics[width=135mm]{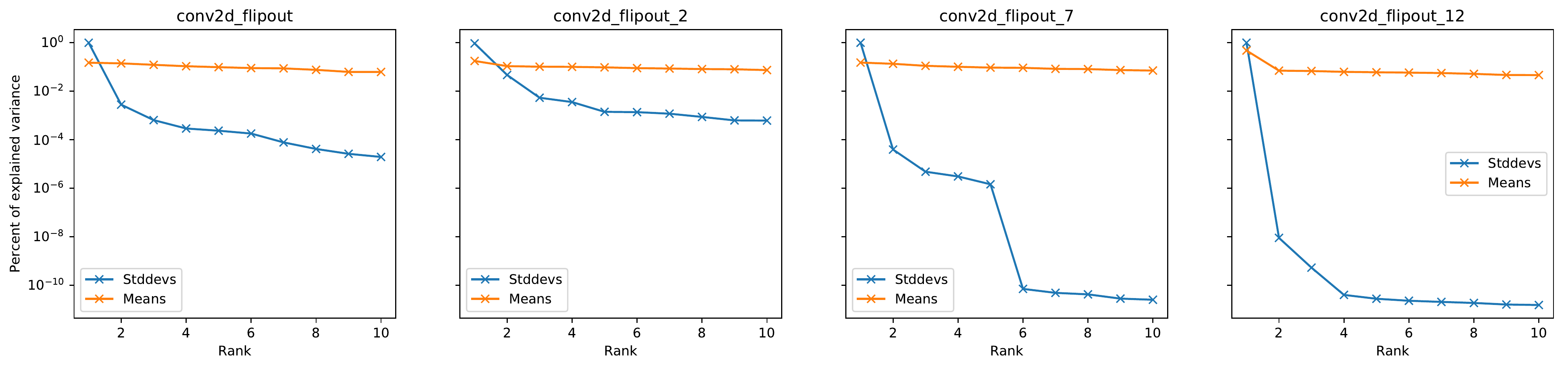}
\end{minipage}%
\caption{Fraction of variance explained per each singular value from SVD of partially flattened tensors of posterior means and posterior standard deviations for different convolutional layers of the ResNet-18 GMFVI BNN trained on CIFAR-10. Posterior standard deviations clearly display strong low-rank structure, with most of the variance contained in the top few singular values, while this is not the case for posterior means. Interestingly, the low-rank structure is the most visible for the final convolutional layers, which also contain the highest number of parameters.} 
\label{fig:exp_var_conv}
\end{figure*}

\begin{table*}[h]
\centering
\begin{tabular}{lccccc}
\hline
Method & -ELBO  $\,\downarrow$ & NLL $\,\downarrow$ & Accuracy $\,\uparrow$ & \#Params $\,\downarrow$ & \%Params $\,\downarrow$ \\ \hline \hline
Mean-field & 122.61\spm{0.012}  & 0.495\spm{0.0080} & 83.5\spm{0.37} & 9,814,026 & 100.0\\ \hline
1-tied    & 122.57\spm{0.012} & 0.658\spm{0.0069} & 81.7\spm{0.39} & 4,929,711 & 50.2\\
2-tied    & 122.77\spm{0.012} & 0.503\spm{0.0080} & 83.2\spm{0.37} & 4,946,964 & 50.4 \\
3-tied    & 122.67\spm{0.012} & 0.501\spm{0.0079} & 83.2\spm{0.37} & 4,964,217 & 50.6\\ \hline
\end{tabular}%
\caption{Impact of the low-rank approximation of the GMFVI-trained posterior standard deviations of a ResNet-18 model on the model's predictive performance. We report mean and SEM of each metric across 100 weights samples. The low-rank approximations with ranks higher than one achieve predictive performance close to that when not using any approximations, while significantly reducing the number of model parameters.}%
\label{tab:lowrank_approx_resnet}%
\end{table*}

\subsection{$k$-tied Normal posterior training}
To exploit the low-rank structure observation, we propose the $k$-tied Normal posterior, as discussed in Section \ref{sec:ktied}. We study the properties of the $k$-tied Normal posterior applied to the MLP, CNN and LSTM models.
We use the $k$-tied Normal variational posterior for all the dense layers of the analyzed models. Namely, we use the $k$-tied Normal variational posterior for all the three layers of the MLP model, for the two dense layers of the CNN model and for the LSTM cell's kernel and recurrent kernel. 

We initialize the parameters $u_{ik}$ and $v_{jk}$ of the $k$-tied Normal distribution so that after the outer-product operation the respective standard deviations $\sigma_{ij}$ have the same mean values as we obtain when using the standard GMFVI posterior parametrization. More precisely, we initialize the parameters $u_{ik}$ and $v_{jk}$ so that after the outer-product operation the respective $\sigma_{ij}$ standard deviations have means at $0.01$ before transforming to log-domain. This means that in the log domain the parameters $u_{ik}$ and $v_{jk}$ are initialized as $0.5 (\log(0.01) - \log(k))$. We also add white noise $\mathcal{N}(0, 0.1)$ to the values of $u_{ik}$ and $v_{jk}$ in the log domain to break symmetry. 

During training of the models with the $k$-tied Normal posterior, we linearly anneal the contribution of the $D_{KL}$ term of the ELBO loss. %
We select the best linear coefficient for the annealing from $\{5 \times 10^{-5}, 5 \times 10^{-6}\}$ (per batch) and increase the effective contribution every 100 batches in a step-wise manner. 
In particular, we anneal the $D_{KL}$ term to obtain the predictive performance results for all the models in Figure~\ref{tab:exploit_res} in the main paper. %
However, we do not perform the annealing in the Signal-to-Noise ratio (SNR) and negative ELBO convergence speed experiments in the same Figure~\ref{tab:exploit_res}. In these two cases, KL annealing would occlude the values of interest, which show the clear impact of the $k$-tied Normal posterior.

\end{document}